\theoremstyle:=definition,remark,plain\do{%
        \expandafter\g@addto@macro\csname th@\theoremstyle\endcsname{%
            \addtolength\thm@preskip\parskip
            }%
        }
\DeclareMathOperator{\sgn}{sgn}
\newtheorem*{rep@theorem}{\rep@title}
\newcommand{\newreptheorem}[2]{%
\newenvironment{rep#1}[1]{%
 \def\rep@title{#2 \ref{##1}}%
 \begin{rep@theorem}}%
 {\end{rep@theorem}}}
\newtheorem{lem}{Lemma}
\newtheorem{thm}{Theorem}
\newcommand{\indim}{d} 
\newcommand{\hdim}{h} 
\newcommand{\binlim}{l} 
\newcommand{\binstep}{\epsilon} 
\title{Model Reconstruction from Model Explanations}
\author{
Smitha Milli \, Ludwig Schmidt \, Anca D. Dragan \, Moritz Hardt
\\
University of California, Berkeley \\
\texttt{\{smilli,ludwig,anca,hardt\}@berkeley.edu}
}
\date{}
\begin{document}

\maketitle

\begin{abstract}
We show through theory and experiment that gradient-based explanations of a model quickly reveal the model itself. Our results speak to a tension between the desire to keep a proprietary model secret and the ability to offer model explanations.

On the theoretical side, we give an algorithm that provably learns a two-layer ReLU network in a setting where the algorithm may query the gradient of the model with respect to chosen inputs. The number of queries is independent of the dimension and nearly optimal in its dependence on the model size. Of interest not only from a learning-theoretic perspective, this result highlights the power of gradients rather than labels as a learning primitive.

Complementing our theory, we give effective heuristics for reconstructing models from gradient explanations that are orders of magnitude more query-efficient than reconstruction attacks relying on prediction interfaces.
\end{abstract}

\section{Introduction}
Commercial machine learning models increasingly support consequential decisions in numerous domains including medical diagnosis, employment, and criminal justice. In such applications, there is now growing demand for methods that explain a model's decision. The secrecy of a model strongly fuels this demand.

At the same time, there are a number of valid reasons a company might wish to keep its machine learning models secret. The competitive value of the product is one consideration. Revealed models may also be easier to \emph{game}, resulting in diminished predictive power~\citep{dalvi2004adversarial,hardt2016strategic}. Yet another reason is that the model might leak sensitive information about the data it was trained on~\citep{shokri2017membership,carlini2018secret}.

In this work, we point out a tension between keeping a model secret and explaining its decisions. We show that a popular class of existing methods to explain a model's decision quickly reveals the model itself in what is typically an undesired side effect. 

Numerous explanation methods have been proposed in an ongoing line of research. Among these methods, \emph{saliency maps} are a widespread technique to highlight characteristics of an input deemed relevant for the prediction of a model. The most basic saliency map is to compute the gradient of the model with respect to a chosen input \citep{baehrens2010explain,simonyan2013deep} and numerous variants add different transformations to the raw gradients leading to some disagreement over which of these heuristics is preferable in what
context \citep{zeiler2014visualizing, springenberg2014striving, smilkov2017smoothgrad, sundararajan2017axiomatic}. Abstracting away from these implementation details, we focus on reconstructing models given the basic underlying primitive, which is gradients of the model with respect to its inputs.

\subsection{Our contributions}
Our contributions are twofold, spanning both a theoretical and experimental component.

\textbf{Learning from input gradients.} On the theoretical side, we introduce a model of learning from \emph{input gradient queries}. In this model, a learning algorithm can observe gradients of an unknown model at chosen query inputs. This model turns out to be rich in its mathematical structure and connections to standard learning models, such as learning from \emph{membership queries}, in which the learner can request the model's prediction at a given input.

In our setting, since the gradient provides more information than a single label, there is hope that learning algorithms can get by with far fewer queries. We prove that this is indeed the case. To build up intuition with a simple example, consider a linear model $f(x)= \langle w, x\rangle,$ specified by a weight vector $w\in\mathbb{R}^d.$ The gradient of the model with respect to any input $x$ is just equal to the model parameters $w=\nabla_x f(x).$ Thus, we can learn a linear model from a single input gradient query. 

Going beyond linear models,
We analyze two-layer neural networks with ReLU transitions of the form $f(x)=\langle w, \mathrm{ReLU}(Ax)\rangle$ where $A \in \mathbb{R}^{h \times d}$. Here, $\mathrm{ReLU}(u)=\max\{u,0\}$ applies coordinate-wise to a vector. The problem of learning such networks has received much renewed interest in the last few years as it poses a non-trivial challenge en route to deeper non-linear models \cite{safran2017spurious,tian2017analytical,zhong2017recovery}.

\begin{thm}[informal]
\label{thm:main-thm}
Assuming the rows of the weight matrix~$A$ are linearly independent, our algorithm recovers a functionally equivalent model from $O(h\log h)$ input gradient queries and function evaluations with high probability.
\end{thm}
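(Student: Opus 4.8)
The plan is to exploit the piecewise-linear structure of $f(x) = \langle w, \mathrm{ReLU}(Ax) \rangle$. On any region where the sign pattern of $Ax$ is fixed, $f$ is linear and $\nabla_x f(x) = A^\top \mathrm{diag}(s) w$ where $s \in \{0,1\}^h$ records which ReLU units are active. As $x$ crosses the hyperplane $\{x : \langle a_i, x\rangle = 0\}$ associated with row $a_i$ of $A$, exactly one coordinate of $s$ flips, so the gradient jumps by $\pm w_i a_i$. Thus the difference of gradients taken on the two sides of a single hyperplane crossing reveals the rank-one object $w_i a_i$, i.e.\ the direction $a_i$ up to scale and the scalar $w_i$ up to the same scale. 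Since functional equivalence only requires recovering each pair $(a_i, w_i)$ up to the rescaling $(a_i, w_i) \mapsto (c a_i, w_i/c)$ for $c > 0$ (which leaves $w_i \mathrm{ReLU}(\langle a_i, x\rangle)$ unchanged), recovering $w_i a_i$ together with, say, $\|a_i\|$ — or just normalizing — suffices.

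Concretely, I would proceed as follows. First, pick a generic direction and do a one-dimensional sweep: restrict $x$ to a line $t \mapsto x_0 + t v$ and query $\nabla f$ along it. Generic $v$ guarantees the line meets the $h$ hyperplanes at $h$ distinct points and avoids lower-dimensional intersections, so the gradient is piecewise constant with exactly $h$ jumps; function evaluations (or a binary search on $\|\nabla f\|$) locate each breakpoint, and the jump at the $j$-th breakpoint is $\pm w_{\pi(j)} a_{\pi(j)}$ for some unknown permutation $\pi$. This already recovers all the rank-one terms, hence a functionally equivalent network, from $O(h)$ gradient queries plus the evaluations needed to find breakpoints. The $\log h$ factor in the claimed $O(h \log h)$ presumably comes from the binary-search cost of pinpointing each of the $h$ breakpoints to sufficient accuracy, or from a coupon-collector-style argument if units are isolated by random restarts rather than a single sweep; I would set up the sweep so that each breakpoint needs $O(\log h)$ (or $O(\log(1/\delta))$) steps.

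Several points need care. (i) \emph{Sign and ordering ambiguity}: the jump gives $\pm w_i a_i$, and I must fix the sign — this can be done by noting that on one side of the crossing the unit is off and contributes $0$, so comparing a function evaluation just inside the active region to the linear extrapolation from the inactive side pins down $\mathrm{sgn}(w_i)$ and the magnitude, or by using a second nearby line. (ii) \emph{Degenerate crossings}: two hyperplanes could be crossed at nearly the same $t$, or $v$ could be parallel to some $a_i$; linear independence of the rows of $A$ plus choosing $v$ from a continuous distribution makes the bad set measure zero, and a short perturbation argument gives a quantitative margin with high probability. (iii) \emph{Zero weights / inactive units}: if some $w_i = 0$ that unit is invisible, but then it doesn't affect $f$ at all, so omitting it still yields a functionally equivalent model. (iv) \emph{Reassembling $A$ and $w$}: once the $h$ rank-one terms $\{c_i, c_i^{-1} w_i \cdot (c_i a_i)\}$... more simply, once we have vectors $u_i := w_i a_i$, we output $\hat a_i = u_i / \|u_i\|$ and $\hat w_i = \|u_i\| \cdot \mathrm{sgn}$ determined in (i); linear independence of the true $a_i$ guarantees the recovered directions are distinct and the reconstruction is exact (not merely approximate, in the noiseless model).

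The main obstacle I expect is the breakpoint-localization and genericity bookkeeping: proving that a randomly chosen sweep direction yields $h$ well-separated crossings with high probability, and that $O(\log h)$ queries per crossing suffice to read off the gradient jump cleanly despite the need to query strictly inside each linear region rather than on its boundary. The underlying geometric idea — gradient jumps expose rank-one pieces of a ReLU network — is the easy and robust part; turning "generic line hits each hyperplane once" into an explicit query bound with controlled failure probability, and handling the sign disambiguation uniformly, is where the real work lies.
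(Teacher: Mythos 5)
Your step one is essentially the paper's Algorithm 1a: pick a random line $t \mapsto u + tv$, binary-search for the $h$ breakpoints where $\nabla f$ changes, and read off $\pm w_i A_i$ from each gradient jump; the $\log h$ factor does come from the per-breakpoint binary search, and the "genericity bookkeeping" you flag is exactly the paper's anti-concentration bound (Lemma \ref{lem:anti-conc-bin}, that the breakpoints $t_i$ are pairwise $\epsilon$-separated) together with a concentration bound that they all lie in $[-l,l]$. That part of your plan is sound.

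The genuine gap is in your sign disambiguation, item (i). There are two distinct ambiguities you have partially conflated. The sign of $w_i$ is indeed recoverable from the line data (the directional derivative $\langle \nabla f, v\rangle$ jumps up at a kink iff $w_i>0$, by convexity of $t \mapsto w_i\max(\langle A_i, u+tv\rangle,0)$). But the \emph{orientation} of $A_i$ --- i.e., which half-space $\{\langle A_i, x\rangle > 0\}$ is the active one --- cannot be determined by "comparing a function evaluation just inside the active region to the linear extrapolation from the inactive side": $f$ is continuous and piecewise linear, so its restriction to the sweep line is completely determined by the observed gradients along the line plus a single offset, and both orientation choices (with the other units' contributions adjusted accordingly) reproduce the identical data along the entire line while defining different functions off it. A "second nearby line" does not obviously fix this either unless you specify how the two lines' data are combined. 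This is precisely why the paper has a separate second stage (Algorithm \ref{algo:s-recovery}): it evaluates $f$ at $h$ points $x_1,\dots,x_h$ lying in a common cell \emph{and at their negations} $-x_1,\dots,-x_h$, and solves a $2h\times 2h$ linear system for the sign pattern, with invertibility (Lemma \ref{lem:fullrank}) resting on the linear independence of the rows of $A$. Tellingly, your proposal never actually uses the linear-independence hypothesis that the theorem assumes --- that assumption is needed exactly and only for this orientation-recovery step, and its absence from your argument is the symptom of the missing idea.
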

The $O(h \log h)$ queries our theorem requires is optimal to within a logarithmic factor, since it takes $dh+h$ parameters to specify the model, and each query reveals only $O(d)$ numbers. Furthermore, compared to membership queries, gradient queries reduce the number of queries needed by approximately a factor of $d$, since it takes $\Omega(dh)$ membership queries to specify the model. 

Although our algorithm enjoys an intuitive geometric interpretation, the proof requires a delicate argument, as well as an anti-concentration bound that may be useful independently.

\textbf{Practical reconstruction methods.} 
In a second step, we explore practically effective heuristics to reconstruct a model from input gradient queries. Our experiments show that reconstructing models from explanations is not just a theoretical concern. If a company were to provide an \emph{explanation API} with standard saliency maps, it would effectively give up the underlying model, which it may not be willing to do for reasons mentioned above. This situation parallels an ongoing investigation on \emph{stealing models from prediction APIs}~\citep{tramer2016stealing}. However, as our results show, with explanation APIs we need far fewer queries, thus greatly exacerbating the threat of model leakage.

Our experiments focus on a heuristic for learning from input query gradients. While our theoretical method is specific to two-layer networks, our heuristic is agnostic to the shape of the target model. At the outset, our heuristic simply queries a number of input gradients and fits a model against the observed gradients in much the same way we would fit a model against labels. We find that this heuristic reduces the number of queries needed to learn models on MNIST and CIFAR10 by orders of magnitude, even in cases where the model class is unknown or the data distribution is unknown.

\textbf{Conclusion.} Our work demonstrates that establishing usable explanation methods for machine learning models faces another hurdle in commercial applications. Whatever criteria of explanation quality we choose must be weighed against the risk of model leakage resulting from the method at hand. We see our work as only a first step in this new direction that raises many intriguing questions.


Does our theoretical result extend to depth-$3$ networks? Ignoring computational efficiency, what is the optimal query complexity? In particular, can we learn a $k$-layer ReLU network with $h$ units at each layer from only $\tilde O(kh)$ queries? Can we design useful explanation methods resilient to model reconstruction attacks? Although a natural and important question to ask, there is no currently agreed upon measure of explanation quality, which makes it difficult to formally study this trade-off. 


%

\section{Problem statement: reconstructing a two-layer ReLU network} \label{sec:model}
We consider the problem of finding a classifier $\hat{f}$ identical to an unknown classifier $f$ when given access to membership and gradient queries. That is, we assume access to an oracle that given a query input $x$ returns the evaluation of $f$ at $x$ and the gradient $\nabla_x f(x)$ of $f$ with respect to $x.$

We analyze the case where the function $f : \mathbb{R}^{d} \rightarrow \mathbb{R}$ is represented by a one hidden-layer neural network with ReLU activations:
\begin{align}\label{eq:two-layer-form}
    f(x) = \sum_{i=1}^{h} w_i \max(A_i^\top x, 0)\,.
\end{align}
Here, the model parameters are $A \in \mathbb{R}^{h \times \indim}$ and $w \in \mathbb{R}^{h}$. We use $A_i$ to denote the $i$-th row of $A$. We make the following three assumptions:
\begin{enumerate}[noitemsep, nolistsep]
    \item The rows $A_1, \dots A_h$ are unit vectors. \label{as:unit-norm}
    \item No two rows $A_i$ and $A_j$ with $i\ne j$ are collinear, i.e., $\langle A_i, A_j \rangle \leq 1-c$ for some $c>0$. \label{as:collinearity}
    \item The rows $A_1, \dots, A_h$ are linearly independent.
\end{enumerate}

The first two assumptions are without loss of generality, 
as they follow from simple reparameterizations of the network that involve scaling $w$ or $A$ or reducing the hidden dimension.

Our main result is the following theorem, which shows that our sample complexity for learning the function with gradient queries has no dependence on the input dimension $d$.

\begin{reptheorem}{thm:main-thm}
Suppose, the unknown function~$f$ satisfies our assumptions. Then,
with probability $1 - \delta$, Algorithm \ref{algo:model-recovery} succeeds to find a function $\hat{f}$ such that $\hat{f} = f$ in $O(h \log \frac{h}{\delta})$ queries. If the Algorithm fails, then it notifies of the failure.
\end{reptheorem}

Section \ref{sec:algo} contains our algorithm and proof of correctness. In Appendix \ref{app:membership} we show that our algorithm can also be converted to one which learns the function $f$ in $O(dh \log \frac{h}{\delta})$ membership queries by using membership queries to approximate gradients of $f$.

\section{Algorithm} \label{sec:algo} \label{sec:gradients}
Before we formally introduce our algorithm, we briefly provide some high-level intuition.
First, note that we can express our two-layer ReLU networks as
\begin{align}
    \textstyle f(x) = \sum_{i=1}^{h} g(x)_iw_iA_i^\top x\,,
\end{align}
where $g(x) = \mathbb{I}\{Ax \geq 0 \}$. The \textit{separating hyperplanes} defined by the normal vectors $A_1, \dots A_h$ split the input space into cells represented by the possible values of $g(x)$. Within each such cell, the function $f$ is linear. See Figure \ref{fig:algo-z-recovery} for an example visualization of these cells.

Our algorithm can be separated into two steps. First, we find the separating hyperplanes of $f$. In particular, we recover unsigned, weighted normal vectors $w_iA_i$ or $-w_iA_i$ for $i \in [h]$. The second step then recovers the sign information for these normal vectors. More precisely, the two steps are the following:
\begin{enumerate}
    \item Recover a matrix $Z \in \mathbb{R}^{h \times d}$ such that $Z_{p(i)} = w_iA_i$ or $Z_{p(i)} = -w_i A_i$ for some permutation $p$ of $[h]$. (Algorithm \ref{algo:z-recovery})
    \item Recover a vector $s \in \{-1, 0, 1\}^{2h}$ such that $f(x) = \begin{bmatrix}\max(Zx, 0)^\top & \max(-Zx, 0)^\top \end{bmatrix}s$. (Algorithm \ref{algo:s-recovery})
\end{enumerate}

Together, the matrix $Z$ and vector $s$ identify the function $f$. We analyze the first step in Section \ref{sec:recover-z} and the second step in Section \ref{sec:recover-s}.

\begin{algorithm}[h]
\SetAlgoLined
\Fn{learnModel($h$, $\epsilon$, $l$)}{
 $Z \gets$ \textit{recoverZ}($h$, $\epsilon$, $l$) \\
 $s \gets$ \textit{recoverS}($Z$) \\
 \KwRet\ $Z, s$}
\caption{Recovery of $f$}
\label{algo:model-recovery}
\end{algorithm}

\setcounter{algocf}{0}
\renewcommand\thealgocf{1\alph{algocf}}

\begin{algorithm}[h]
\SetAlgoLined
 \Fn{recoverZ($h$, $\epsilon$, $l$)}{
    Pick $u, v \sim \mathcal{N}(0, I_d)$ and let $Z \in \mathbb{R}^{h \times d}$ \\
    $t_l$, $t_r$ $\gets -l, \,l$ \\
    \For{$i = 1, \dots h$}{
        $Z_i$, $t_l \gets$ binarySearch($t_l$, $t_r$, $\epsilon$) \\
    }
    \KwRet $Z$
 }

 \Fn{binarySearch($t_l$, $t_r$, $\epsilon$)}{
    \While{$t_l \leq t_r$}{
        $t_m \gets (t_l+t_r)/2$ \\
        $x_l \gets u + t_lv$, ~$\,x_m \gets u+t_mv$, ~$x_r \gets u + t_rv$  \\
        \uIf{$t_r - t_l \leq \epsilon$}{
            \KwRet $\nabla f(x_r) -\nabla f(x_l), ~t_r$ \\
        }
        \uIf{$\|\nabla f(x_l) -\nabla f(x_m)\|_2 > 0$}{
            $t_r \gets t_m$
        }\uElseIf{$\|\nabla f(x_m) -\nabla f(x_r)\|_2 > 0$}{
            $t_l \gets t_m$
        }
        \KwThrow Failure
    }
    \KwThrow Failure
    
 }
 \caption{Recovery of $Z$}
 \label{algo:z-recovery}
\end{algorithm}

\begin{algorithm}[h]
\SetAlgoLined
  \Fn{recoverS(Z)}{
    Pick $X \in \mathbb{R}^{d \times h}$ such that $\nabla f(x_1) = \dots = \nabla f(x_h)$ and $\text{Rank}(ZX) = h$. (See Appendix \ref{sec:pick-X})\\
    $M \gets \begin{bmatrix} \max(ZX, 0)^\top & \max(-ZX, 0)^\top \\ \max(-ZX, 0)^\top & \max(ZX, 0)^\top \end{bmatrix}$ \\
    Solve for $s \in \mathbb{R}^{2h}$ such that $Ms = [f(x_1), \dots f(x_h), f(-x_1), \dots f(-x_h)]$ \\
    \KwRet $s$
 }
\caption{Recovery of $s$}
\label{algo:s-recovery}
\end{algorithm}

\subsection{Step one: recovering the separating hyperplanes} \label{sec:recover-z}
\begin{figure}
    \centering
    \includegraphics[scale=0.5]{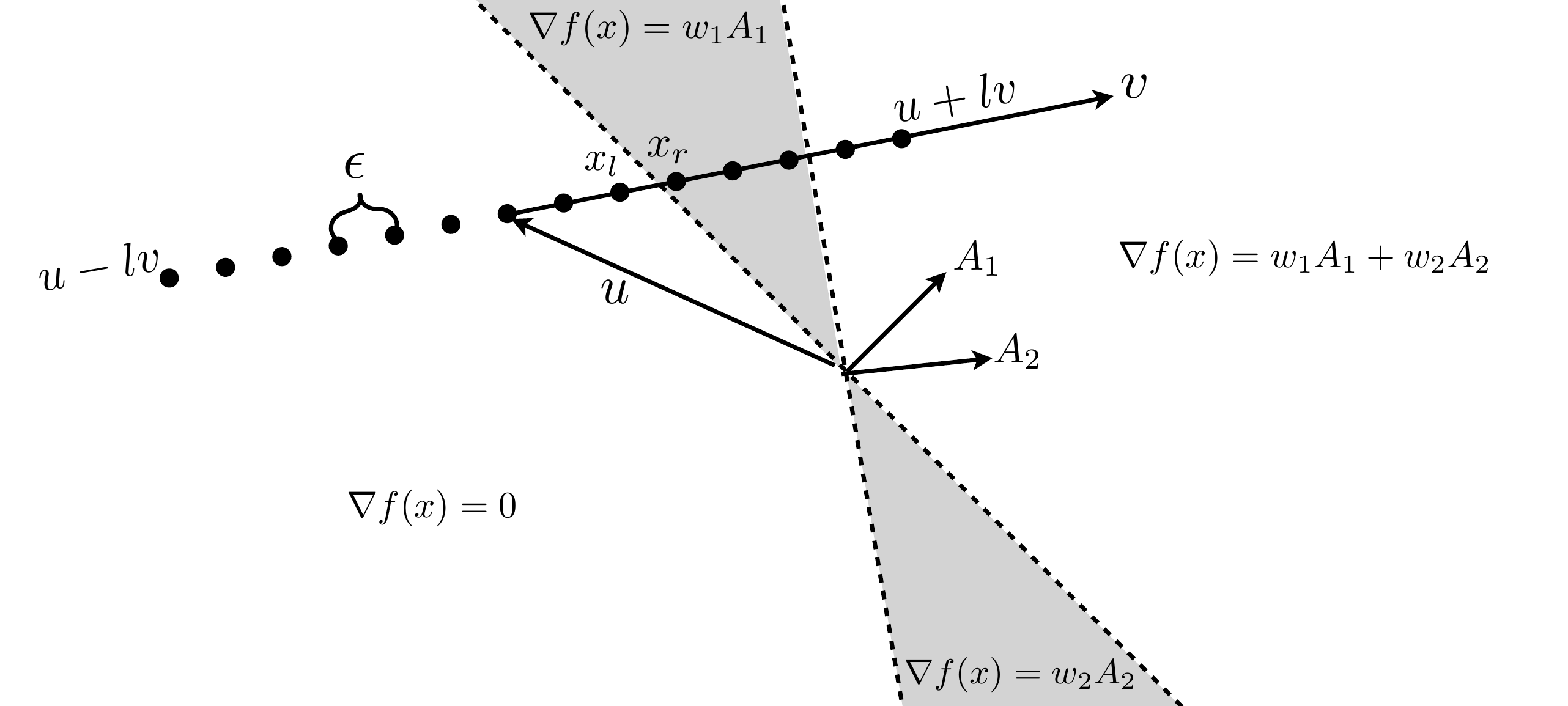}
    \caption{An illustration of Algorithm \ref{algo:z-recovery} when the input domain of the function $f$ is $\mathbb{R}^2$ and the hidden dimension $h$ is equal to two. The two hyperplanes with normal vector $A_1$ and $A_2$ separate the input space into four cells where the gradient of $f$ is constant.  Algorithm \ref{algo:z-recovery} picks two random vectors $u$ and $v$ and searches for a change in the gradient of $f$ using a binary search along a line segment between $u$ and $v$. When two points are found that are sufficiently close, but have differing gradients, then the difference in their gradients is added as a row to the recovered matrix $Z$. For example, $\nabla f(x_r) - \nabla f(x_l) = w_1A_1$ is added to $Z$. By running the binary search $h$ times, Algorithm \ref{algo:z-recovery} recovers $w_iA_i$ up to a sign for all $i \in [h]$.}
    \label{fig:algo-z-recovery}
\end{figure}


Algorithm \ref{algo:z-recovery} finds the separating hyperplanes by exploiting the structure of the gradient of $f$:
\begin{align*}
\textstyle \nabla f(x) = \sum_{i=1}^{h} g(x)_i w_iA_i\,,
\end{align*}
where $g(x) = \mathbb{I}\{Ax \geq 0 \}$ as before. Note that points within the same cell have the same gradient.
So if we find two points $x$ and $y$ with different gradients, we know at least one separating hyperplane must be between $x$ and $y$. Moreover, if the points $x$ and $y$ are sufficiently close to each other, then it is likely that there is only one separating hyperplane between them. In that case, we can then use the difference of gradients to recover a hyperplane (up to signs).
This is because each gradient is simply a sum of a subset of $\{w_iA_i\}_{i=1}^{h}$, and so the difference $\nabla f(y) - \nabla f(x)$ is equal to either $w_iA_i$ or $-w_iA_i$ for some $i \in [h]$. 

In this way, Algorithm \ref{algo:z-recovery} isolates changes in the gradient of $f$ to recover $w_iA_i$ up to a sign for every $i \in [h]$. Figure \ref{fig:algo-z-recovery} provides an illustrated explanation of the algorithm, which we briefly sketch below:
\begin{enumerate}
    \item Pick $u, v \sim \mathcal{N}(0, I_d)$.
    \item Run a binary search with resolution $\epsilon$ along a portion of the line segment between $u - lv$ and $u + lv$ for some $l \in \mathbb{R}$ to find two points $x_l$ and $x_r$ that are sufficiently close ($\|x_r - x_l\|_2 \leq \epsilon \|v\|_2$), but have differing gradients. Add $\nabla f(x_r) - \nabla f(x_l)$ as a row to the matrix $Z$. With high probability, $\nabla f(x_r) - \nabla f(x_l)$ is equal to $w_iA_i$ for some $i \in [h]$.
    \item Repeat Step (2) $h$ times to recover all rows $w_iA_i$ up to their sign, which become the rows of the matrix $Z$.
\end{enumerate}

The proof of correctness relies on showing that with high probability, the following two events hold: (i) The points at which the gradient of $f$ changes are spaced sufficiently far apart. (ii) The same gradient change points are within some line segment of $u$ and $v$ that is not too big. The change points can then be found with a binary search that is bounded within a range that is not too large and uses step sizes that are not too small. In the next lemma, we prove correctness of the binary search given that the change points are spaced appropriately.

\begin{lem}
\label{lem:det-zrecovery}
Let $u, v \in \mathbb{R}^{\indim}$ be such that $\langle A_i, v \rangle \neq 0$ for all $i \in [h]$.
For each $i \in [h]$, also let $t_i \in \mathbb{R}$ be such that $\langle A_i, u + t_i v \rangle = 0$.
If for all $i, j \neq i$ we have $|t_i - t_j| \geq \epsilon$ and $|t_i| \leq \binlim$, then Algorithm \ref{algo:z-recovery} returns a matrix $Z \in \mathbb{R}^{h \times d}$ such that $Z_{p(i)} = w_iA_i$ or $Z_{p(i)} = -w_i A_i$ for some permutation $p$ of $[h]$.
\end{lem}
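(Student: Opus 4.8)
I would reduce the statement to a one-dimensional problem along the query line by studying $\phi(t):=\nabla f(u+tv)$. Since $\langle A_i,v\rangle\ne 0$, each $t\mapsto\langle A_i,u+tv\rangle$ is affine with nonzero slope, so it changes sign exactly once, at $t=t_i$; hence $\phi(t)=\sum_{i=1}^h\mathbb{I}\{\langle A_i,u+tv\rangle\ge 0\}\,w_iA_i$ is a step function with breakpoints exactly $t_1,\dots,t_h$, and crossing $t_i$ changes $\phi$ by $\sgn(\langle A_i,v\rangle)\,w_iA_i\in\{w_iA_i,-w_iA_i\}$. The fact I would isolate up front — and the one place Assumption~3 is used, together with the WLOG normalization that deletes hidden units with $w_i=0$ — is the \emph{certificate property}: for $t<t'$ one has $\phi(t)\ne\phi(t')$ iff some $t_i$ lies strictly between $t$ and $t'$, because $\phi(t')-\phi(t)$ is a $\{0,\pm1\}$-combination of the linearly independent vectors $w_1A_1,\dots,w_hA_h$ and so vanishes iff all its coefficients are $0$. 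After reindexing so that $t_1<\dots<t_h$ (distinct by $\epsilon$-separation), the goal becomes: the $k$-th call to \emph{binarySearch} returns a row in $\{w_kA_k,-w_kA_k\}$ and an abscissa in $[t_k,t_k+\epsilon]$.

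\textbf{One call to binarySearch.} Here I would run the textbook binary-search correctness argument with the loop invariant ``$\phi(t_l)\ne\phi(t_r)$ for the current interval $[t_l,t_r]$'', equivalently (by the certificate property) ``$[t_l,t_r]$ has a breakpoint in its interior''. Four checks: (i) the invariant is preserved — the first branch fires only when $\phi(t_l)\ne\phi(t_m)$ and passes to $[t_l,t_m]$, the second only when $\phi(t_m)\ne\phi(t_r)$ and passes to $[t_m,t_r]$; (ii) the algorithm never throws \texttt{Failure}, since $\phi(t_l)\ne\phi(t_r)$ forces $\phi(t_m)$ to differ from at least one of $\phi(t_l),\phi(t_r)$; (iii) the search descends toward the \emph{smallest} breakpoint of the current interval — it goes left whenever the left half contains one — so that breakpoint, which is $t_k$ on the $k$-th call, is never discarded and remains the minimum of the shrinking interval; (iv) the width halves each step and the loop stops once it is $\le\epsilon$, at which point $\epsilon$-separation makes $t_k$ the only breakpoint in $[t_l,t_r]$, so the returned $\nabla f(x_r)-\nabla f(x_l)=(g(x_r)_k-g(x_l)_k)\,w_kA_k$ is nonzero by the invariant and hence equals $\pm w_kA_k$, while the returned $t_r\in[t_k,t_k+\epsilon]$ because $t_l\le t_k\le t_r$ and $t_r-t_l\le\epsilon$.

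\textbf{Outer loop and the main obstacle.} To finish I would induct over the $h$ iterations of \emph{recoverZ}, which keeps $t_r=l$ fixed and threads the returned abscissa into the next call's left endpoint. Before call $k$ the interval is $[t_{\mathrm{prev}},l]$ with $t_{\mathrm{prev}}\in[t_{k-1},t_{k-1}+\epsilon]$ (and $t_{\mathrm{prev}}=-l$ for $k=1$); by $\epsilon$-separation $t_{\mathrm{prev}}\le t_{k-1}+\epsilon\le t_k$, so $(t_{\mathrm{prev}},l)$ contains exactly the breakpoints $t_k,\dots,t_h$ (nonempty for $k\le h$), and $\phi(t_{\mathrm{prev}})\ne\phi(l)$ by the certificate property — here $|t_i|\le l$ is what guarantees the segment both enters and leaves every halfspace $\langle A_i,\cdot\rangle\ge 0$. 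Thus each call succeeds, call $k$ returning $\pm w_kA_k$, so $Z$ lists the $\pm w_iA_i$ in order of increasing $t_i$, i.e.\ $Z_{p(i)}\in\{w_iA_i,-w_iA_i\}$ for the permutation $p$ ranking the $t_i$. The part I expect to cost the most care is not this geometry but the few \emph{degenerate} configurations the deterministic hypotheses do not quite exclude — e.g.\ a query point coinciding with some $t_i$, a breakpoint at exactly $\pm l$, a breakpoint coinciding with a search-interval endpoint, or two breakpoints exactly $\epsilon$ apart landing on the two ends of the final width-$\epsilon$ interval — any of which could make the returned difference ambiguous or a sum of two weighted rows rather than one. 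I would dispose of these by strengthening the relevant ``$\le$'' to ``$<$'' — which the random draw of $u,v$ in the proof of Theorem~\ref{thm:main-thm} guarantees almost surely, also using that every parameter the algorithm queries is $l$ times a dyadic rational — or by a fixed tie-breaking rule, leaving the argument above intact.
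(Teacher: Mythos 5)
Your proof takes essentially the same route as the paper's: order the breakpoints $t_{k_1}<\dots<t_{k_h}$, induct over the $h$ calls to \emph{binarySearch} showing each call maintains a bracketing interval whose smallest interior breakpoint is the next $t_{k_i}$, and use the $\epsilon$-separation to conclude that the final width-$\epsilon$ interval isolates a single breakpoint so the returned gradient difference is $\pm w_{k_i}A_{k_i}$. You are in fact somewhat more careful than the paper: you make explicit the ``certificate property'' (gradients differ iff a breakpoint lies strictly between, which relies on linear independence of the rows) that the paper's proof uses only implicitly when asserting $t_l$ never passes $t_{k_i}$, and you flag the measure-zero degeneracies that the paper silently ignores.
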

\begin{proof}
Let $k_1, \dots, k_h$ be the indices such that $t_{k_1} < t_{k_2} < \dots < t_{k_h}$. To prove the lemma we will show that on the $i$-th call to $\textit{binarySearch}$, either $-w_{k_i}A_{k_i}$ or $w_{k_i}A_{k_i}$ is added as a row to matrix $Z$. 

First, we make the following assumption, which we will later prove: assume that $t_{k_i} = \min_{j : t_j \geq t^{(i)}_{l}} t_j$ where $t^{(i)}_l$ is the value of the variable $t_l$ at the start of the $i$-th call to \textit{binarySearch}. Given this assumption, the $i$-th call to \textit{binarySearch} adds $-w_{k_i}A_{k_i}$ or $w_{k_i}A_{k_i}$ to the matrix $Z$. To see this, note that on each iteration of the while loop in \textit{binarySearch} either the variable $t_l$ increases or the variable $t_r$ decreases, and thus \textit{binarySearch} always terminates. However, $t_l$ dose not increase past $t_{k_i}$ and $t_r$ does not decrease past $t_{k_i}$. So, when the condition for termination of the while loop is met we have $|t_l - t_r| \leq \epsilon$, $t_l \leq t_{k_i}$, and $t_r \geq t_{k_i}$. Since $|t_{k_{j}} - t_{k_{i}}| \geq \epsilon$ for all $j \neq i$, the row $\nabla f(t_r) - \nabla f(t_l)$ returned by \textit{binarySearch} is equal to either $w_{k_{i}}A_{k_{i}}$ or $-w_{k_{i}}A_{k_{i}}$.

Now we revisit the assumption that $t_{k_i} = \min_{j : t_j \geq t^{(i)}_{l}} t_j$. We prove the assumption by induction. The base case $i = 0$ is clearly true: $t_{k_1} = \min_{j} t_j = \min_{j : t_j \geq t^{(i)}_{l}} t_j$ because $t^{(1)}_l = -l$ and $-l \leq t_{k_1} < t_{k_2} < t_{k_l} \leq l$. On the $(i + 1)$-th call to $\textit{binarySearch}$ the variable $t_l$ is set to the value of $t_r$ when the $i$-th call to $\textit{binarySearch}$ terminated. When the $i$-th call to $\textit{binarySearch}$ finishes, the value of the variable $t_r$ is above $\min_{j : t_j \geq t^{(i)}_l} t_j = t_{k_i}$, but less than $t_{k_{i+1}}$. Thus, $t_{k_{i+1}} = \min_{j : t_j \geq t^{(i+1)}_l} t_j$.

Therefore, the returned matrix $Z$ is such that $Z_{p(i)} = w_iA_i$ or $Z_{p(i)} = -w_iA_i$ where the permutation $p$ of $[h]$ is defined by $p(i) = j$ where $k_j = i$.
\end{proof}

The next two lemmas (proved in Appendix \ref{app:step_one}) establish the necessary anti-concentration and concentration bounds for showing that the change points are spaced sufficiently far apart (Lemma \ref{lem:anti-conc-bin}), but still within some line segment of $u$ and $v$ that is not too big (Lemma \ref{lem:conc-bin}).

\begin{lem}
\label{lem:anti-conc-bin} Let $a, b \in \mathcal{S}^{d-1}$ be unit vectors such that $|\langle a, b \rangle| \leq 1 - c$ for some scalar $c \in [0, 1]$. Suppose we pick random vectors $u, v \sim \mathcal{N}(0, I_d)$. Let $t_1, t_2 \in \mathbb{R}$ be scalars such that $\langle a, u + t_1 v \rangle = 0$ and $\langle b, u + t_2 v \rangle = 0$.\footnote{With probability one such a $t$ exists.} Then,
\begin{align*}
   P(|t_1 - t_2| \leq \epsilon) \leq 3^{\frac{4}{3}}\left (\frac{\epsilon}{c} \right )^{\frac{2}{3}}.
\end{align*}
\end{lem}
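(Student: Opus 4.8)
The plan is to work out explicitly what the scalars $t_1$ and $t_2$ are as functions of $u$ and $v$, and then reduce the event $|t_1 - t_2| \leq \epsilon$ to a small-ball (anti-concentration) statement about a low-dimensional Gaussian. From $\langle a, u + t_1 v \rangle = 0$ we get $t_1 = -\langle a, u\rangle / \langle a, v\rangle$, and similarly $t_2 = -\langle b, u\rangle / \langle b, v\rangle$. First I would introduce the four scalar Gaussians $\alpha = \langle a, u\rangle$, $\alpha' = \langle a, v\rangle$, $\beta = \langle b, u\rangle$, $\beta' = \langle b, v\rangle$; each is $\mathcal{N}(0,1)$, the pair $(\alpha,\alpha')$ is independent of the pair $(\beta,\beta')$ only if $a\perp b$, so in general I must track the covariance structure, which is governed entirely by $\rho := \langle a,b\rangle$ with $|\rho|\le 1-c$. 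The key identity is
\begin{align*}
t_1 - t_2 = -\frac{\alpha}{\alpha'} + \frac{\beta}{\beta'} = \frac{\alpha'\beta - \alpha\beta'}{\alpha'\beta'}.
\end{align*}
So $|t_1-t_2|\le\epsilon$ forces the numerator $N := \alpha'\beta - \alpha\beta'$ to be small relative to $|\alpha'\beta'|$.

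The main step is then a two-pronged bound: either $|N|$ is small, or $|\alpha'\beta'|$ is large, and I would union-bound over a threshold $\tau$ chosen at the end to balance the two terms (this is where the exponent $2/3$ and the constant $3^{4/3}$ come from — optimizing $\tau$ in something like $\Pr(|N|\le \tau) + \Pr(|\alpha'\beta'|\ge \tau/\epsilon)$, or more precisely a ratio version of it). For the ``numerator is small'' piece, I need anti-concentration of $N = \alpha'\beta - \alpha\beta'$. Conditioning on $u$ (hence on $\alpha,\beta$), $N$ is a linear combination of the Gaussian vector $v$, namely $N = \langle \beta a - \alpha b, v\rangle$, which conditioned on $u$ is Gaussian with variance $\|\beta a - \alpha b\|_2^2 = \alpha^2 + \beta^2 - 2\rho\alpha\beta$. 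I then need two facts: (a) a standard Gaussian small-ball bound $\Pr(|\mathcal{N}(0,\sigma^2)|\le \tau)\le \tau/(\sigma\sqrt{2\pi}) \cdot \sqrt{2}$ type estimate, giving $\Pr(|N|\le\tau \mid u)\lesssim \tau/\|\beta a-\alpha b\|_2$, and (b) an anti-concentration bound showing $\|\beta a - \alpha b\|_2$ is not too small with high probability over $u$. For (b), write $\alpha^2+\beta^2-2\rho\alpha\beta$ as a quadratic form in the (correlated) Gaussian pair $(\alpha,\beta)$; since $|\rho|\le 1-c$ its smallest eigenvalue is at least $c$ (indeed the form equals $(1-\rho)\cdot\frac{(\alpha+\beta)^2}{2} + (1+\rho)\cdot\frac{(\alpha-\beta)^2}{2}$, hence is $\ge c\cdot\max\{(\alpha+\beta)^2,(\alpha-\beta)^2\}/2 \ge \tfrac{c}{2}(\alpha^2+\beta^2)$ up to constants), so $\|\beta a-\alpha b\|_2^2 \gtrsim c\,(\alpha^2+\beta^2)$, and $\alpha^2+\beta^2$ being tiny is itself a low-probability event with probability $O(\text{radius}^2)$.

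Putting it together: condition on $u$; on the good event that $\alpha^2+\beta^2 \ge r^2$ (probability $\ge 1 - O(r^2)$), the conditional variance of $N$ is $\gtrsim c r^2$, so $\Pr(|t_1-t_2|\le\epsilon\mid u) \le \Pr(|N| \le \epsilon|\alpha'\beta'| \mid u)$ — here I would also dispense with $|\alpha'\beta'|$ being large by noting $\Pr(|\alpha'|\ge T)$ and $\Pr(|\beta'|\ge T)$ decay, but the cleaner route is to bound $\Pr(|N|\le \epsilon T^2) \lesssim \epsilon T^2/(c^{1/2} r) $ and add $\Pr(|\alpha'\beta'| > T^2) + \Pr(\alpha^2+\beta^2 < r^2)$, then optimize over the free parameters $r, T$. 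Tuning these (with $T$ absorbing a constant and $r$ scaling like $(\epsilon/c)^{1/3}$) yields a bound of the form $\mathrm{const}\cdot(\epsilon/c)^{2/3}$, and a careful accounting of the constants gives $3^{4/3}$. \textbf{The main obstacle} I anticipate is the second anti-concentration fact (b) — controlling the conditional variance $\|\beta a - \alpha b\|_2^2$ away from zero uniformly enough — together with the bookkeeping needed to land exactly on the constant $3^{4/3}$ rather than some larger absolute constant; the ratio structure $|N| \le \epsilon|\alpha'\beta'|$ (as opposed to a clean additive small-ball event) is what makes the threshold optimization delicate, since both a small numerator and a large denominator can trigger the event.
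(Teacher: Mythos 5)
Your setup matches the paper's: the same formulas for $t_1,t_2$, the same ratio identity $t_1-t_2=(\alpha'\beta-\alpha\beta')/(\alpha'\beta')$, and the same overall strategy of union-bounding ``numerator small'' against ``denominator large'' and optimizing a threshold. The gap is in how you handle the numerator $N=\langle \beta a-\alpha b,\,v\rangle$. By conditioning on $u$ and introducing a hard truncation event $\{\alpha^2+\beta^2\ge r^2\}$, you end up with \emph{three} terms to balance: roughly $r^2$ (truncation failure), $\epsilon T/(\sqrt{c}\,r)$ (conditional small-ball for $N$), and $P(|\alpha'\beta'|\ge T)\le 3/T^2$ (Markov on the second moment). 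Equating these forces $T\asymp 1/r$ and $r^4\asymp\epsilon$, which yields a bound of order $\epsilon^{1/2}$, not $\epsilon^{2/3}$. Your proposed tuning ($r\sim(\epsilon/c)^{1/3}$ with $T$ ``absorbing a constant'') does not fix this: with $T=O(1)$ the term $P(|\alpha'\beta'|>T^2)$ is a non-vanishing constant and the bound is vacuous. So the claimed endpoint $3^{4/3}(\epsilon/c)^{2/3}$ does not follow from the plan as written.

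The missing ingredient is an \emph{unconditional}, linear-in-$\tau$ anti-concentration bound for the numerator, which is what permits a clean two-term optimization. The paper gets this by rotating into the $(a,a^{\perp})$ basis, so the numerator becomes $\beta_2(X_1Y_2-X_2Y_1)$ with $X_1,X_2,Y_1,Y_2$ i.i.d.\ standard Gaussians and $\beta_2\ge c$, and then using the identity $XY=\tfrac12(Q-R)$ for independent $\chi^2_1$'s to write $X_1Y_2-X_2Y_1=\tfrac12(Q-R)$ with $Q,R\sim\chi^2_2$ independent; this has a Laplace density bounded by $\tfrac12$, giving $P(|\beta_2(X_1Y_2-X_2Y_1)|\le\tau)\le 2\tau/\beta_2$ with no conditioning. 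Two terms, $2k\sqrt{\epsilon}/\beta_2+3\epsilon/k^2$, then optimize over $k$ to get $3^{4/3}(\epsilon/\beta_2)^{2/3}$. Your route is salvageable without the $\chi^2$ trick: instead of truncating, integrate the conditional bound, i.e.\ $P(|N|\le\tau)\le E_u\bigl[\min\bigl(1,\tfrac{2\tau}{\sqrt{2\pi}\,\|\beta a-\alpha b\|}\bigr)\bigr]$, and observe that $\|\beta a-\alpha b\|^2\ge c(\alpha^2+\beta^2)$ together with the boundedness of the bivariate Gaussian density gives $E[\|\beta a-\alpha b\|^{-1}]=O(1/c)$; this restores a linear-in-$\tau$ numerator bound and hence the $(\epsilon/c)^{2/3}$ rate, though the constant you land on will not be exactly $3^{4/3}$.
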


\begin{lem}
\label{lem:conc-bin}
Let $a \in \mathcal{S}^{d-1}$ be a unit vector. Suppose we pick random vectors $u, v \sim \mathcal{N}(0, 1)$. Let $t \in \mathbb{R}$ be the value such that $\langle a, u + t v \rangle = 0$. Then,
\begin{align*}
   P(|t| \geq l) \leq \frac{2}{\pi l}.
\end{align*}
\end{lem}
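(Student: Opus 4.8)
The plan is to identify the distribution of $t$ explicitly and then read off the tail bound. Writing out the defining equation, $\langle a, u + tv\rangle = \langle a, u\rangle + t\,\langle a, v\rangle = 0$, and noting that $\langle a, v\rangle \neq 0$ with probability one, we obtain
\[
t = -\frac{\langle a, u\rangle}{\langle a, v\rangle}.
\]
Since $a$ is a fixed unit vector and $u, v \sim \mathcal{N}(0, I_d)$ are independent, the projections $X := \langle a, u\rangle$ and $Y := \langle a, v\rangle$ are independent standard normal random variables (this is the only place rotational invariance of the Gaussian enters, and it is a one-line observation). By symmetry of $X$, the ratio $t = -X/Y$ has the same law as $X/Y$, which is the standard Cauchy distribution with density $\rho(c) = \frac{1}{\pi(1+c^2)}$.

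Given this, the tail bound is a direct computation. Using the symmetry of the Cauchy density,
\[
P(|t| \geq l) = \frac{2}{\pi}\int_l^\infty \frac{dc}{1+c^2} = \frac{2}{\pi}\Bigl(\frac{\pi}{2} - \arctan l\Bigr) = \frac{2}{\pi}\arctan\frac{1}{l},
\]
and bounding $\arctan x \leq x$ for all $x \geq 0$ yields $P(|t| \geq l) \leq \frac{2}{\pi l}$, as claimed.

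I do not anticipate a genuine obstacle here: the statement is essentially the classical fact that the parameter locating the intersection of a random line with a fixed hyperplane is Cauchy-distributed. The only points worth spelling out are that $\langle a, v\rangle = 0$ occurs with probability zero, so $t$ is almost surely well defined, and the independence and standard-normality of $\langle a, u\rangle$ and $\langle a, v\rangle$. If one wishes to avoid naming the Cauchy distribution, an equivalent route is to condition on $Y = \langle a, v\rangle$ and apply a Gaussian CDF bound to $|X| \ge l|Y|$ directly, but invoking the Cauchy tail is the cleanest presentation.
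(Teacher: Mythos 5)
Your proposal is correct and follows essentially the same route as the paper: both identify $t = -\langle a,u\rangle/\langle a,v\rangle$ as standard Cauchy and bound the tail via $\arctan$, with your inequality $\arctan(1/l)\le 1/l$ being equivalent to the paper's $\arctan(l)\ge \tfrac{\pi}{2}-\tfrac{1}{l}$. You merely spell out the independence and standard-normality of the projections, which the paper leaves implicit.
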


Finally, the proof of our main theorem for Algorithm \ref{algo:z-recovery} follows by combining the probabilistic guarantees of Lemmas \ref{lem:anti-conc-bin} and  \ref{lem:conc-bin} with the deterministic proof of correctness in Lemma \ref{lem:det-zrecovery}.
\begin{thm}
\label{thm:prob-zrecovery}
With probability $1 - \delta$, Algorithm \ref{algo:z-recovery} succeeds in $O(h \log \frac{h}{\delta})$ queries. If the Algorithm succeeds, it returns a matrix $Z \in \mathbb{R}^{h \times d}$ such that $Z_{p(i)} = w_iA_i$ or $Z_{p(i)} = -w_i A_i$ for some permutation $p$ of $[h]$. If the Algorithm fails, then it notifies of the failure.
\end{thm}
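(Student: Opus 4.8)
The plan is to combine the deterministic correctness guarantee of Lemma~\ref{lem:det-zrecovery} with the probabilistic guarantees of Lemmas~\ref{lem:anti-conc-bin} and~\ref{lem:conc-bin}, and to account separately for the query cost of the binary searches. First I would fix the "good event'' $E$ under which Algorithm~\ref{algo:z-recovery} provably succeeds: namely, that the sampled $u,v\sim\mathcal N(0,I_d)$ satisfy (i) $\langle A_i,v\rangle\neq 0$ for all $i$, (ii) the intersection parameters $t_i$ (defined by $\langle A_i,u+t_iv\rangle=0$) are pairwise $\epsilon$-separated, i.e.\ $|t_i-t_j|\geq\epsilon$ for $i\neq j$, and (iii) $|t_i|\leq l$ for all $i$. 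On $E$, Lemma~\ref{lem:det-zrecovery} directly gives that the algorithm returns a matrix $Z$ with $Z_{p(i)}=\pm w_iA_i$ for a permutation $p$, so the correctness part of the theorem is immediate and only the probability of $E$ and the query complexity remain.

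Next I would bound $P(\overline E)$ by a union bound over the failure modes. Condition (i) fails with probability zero. For condition (ii): by Assumption~\ref{as:collinearity}, every pair $A_i,A_j$ has $|\langle A_i,A_j\rangle|\leq 1-c$, so Lemma~\ref{lem:anti-conc-bin} gives $P(|t_i-t_j|\leq\epsilon)\leq 3^{4/3}(\epsilon/c)^{2/3}$ for each of the $\binom h2$ pairs. For condition (iii): Lemma~\ref{lem:conc-bin} (applied with the fixed unit vector $A_i$) gives $P(|t_i|\geq l)\leq \tfrac{2}{\pi l}$ for each of the $h$ indices. Union bounding, $P(\overline E)\leq \binom h2\, 3^{4/3}(\epsilon/c)^{2/3} + h\cdot\tfrac{2}{\pi l}$. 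To drive this below $\delta$, I would choose $l=\Theta(h/\delta)$ so the second term is at most $\delta/2$, and choose $\epsilon = \Theta\!\big(c\,(\delta/h^2)^{3/2}\big)$ so the first term is at most $\delta/2$; this is the parameter setting the calling routine should use, and it is polynomial in $h,1/\delta,1/c$.

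Then I would count queries. Each call to \textit{binarySearch} runs on an interval initially of length at most $t_r-t_l\leq 2l$ and halves it until the width is $\leq\epsilon$, so it performs $O(\log(l/\epsilon))$ iterations, each making $O(1)$ gradient/function queries (at $x_l,x_m,x_r$). There are $h$ calls, giving a total of $O\!\big(h\log(l/\epsilon)\big)$ queries. Substituting the chosen $l$ and $\epsilon$, $\log(l/\epsilon)=O\!\big(\log(h/\delta)+\log(1/c)\big)$; treating $c$ as a fixed constant of the problem instance, this is $O(\log(h/\delta))$, so the total query count is $O(h\log\tfrac h\delta)$ as claimed. Finally, for the failure-notification clause: I would observe that whenever the binary search cannot localize a change point as Lemma~\ref{lem:det-zrecovery} requires — i.e.\ when neither the left nor the right half of the current interval exhibits a gradient change — the pseudocode reaches \texttt{throw Failure}, so on $\overline E$ the algorithm halts with an explicit failure notice rather than returning a wrong $Z$; on $E$ it never hits that branch.

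The main obstacle I anticipate is not any single step but getting the quantitative bookkeeping to close cleanly: because Lemma~\ref{lem:anti-conc-bin} has the unfavorable $\epsilon^{2/3}$ dependence, $\epsilon$ must be taken polynomially small in $\delta/h^2$, and one must check that the resulting $\log(l/\epsilon)$ factor still collapses to $O(\log(h/\delta))$ (it does, since $\epsilon$ enters only logarithmically in the query count) and that the hidden dependence on $c$ is acceptable as a constant. A secondary subtlety is verifying that the "change points spaced by $\geq\epsilon$'' event is exactly the hypothesis Lemma~\ref{lem:det-zrecovery} needs — in particular that $\epsilon$-separation of the $t_i$ is what makes $\|x_r-x_l\|_2\leq\epsilon\|v\|_2$ suffice to isolate a single hyperplane — so that the deterministic and probabilistic pieces compose without a gap.
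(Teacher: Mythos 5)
Your proposal is correct and follows essentially the same route as the paper: invoke Lemma~\ref{lem:det-zrecovery} on the good event, union-bound its failure probability via Lemmas~\ref{lem:anti-conc-bin} and~\ref{lem:conc-bin}, choose $l$ and $\epsilon$ to make that probability at most $\delta$, and count $O(h\log(l/\epsilon))$ queries. Your parameter choice $l=\Theta(h/\delta)$ is in fact slightly cleaner than the paper's $l=h^2$ (which leaves a residual $\frac{2}{\pi h}$ term in the failure probability), but this does not change the argument or the final bound.
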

\begin{proof}
By Lemma \ref{lem:det-zrecovery}, if $|t_i - t_j|$ and $|t_i| \leq \binlim$ for all $i$ and $j \neq i$, then Algorithm \ref{algo:z-recovery} succeeds. The probability of this event can be lower-bounded as the following.
\begin{align*}
    & P(\forall i, j \neq i : |t_i - t_j| \geq \binstep, \, |t_i| \leq \binlim) \\
    & \geq 1 - \sum_{i=1}^{\hdim}\sum_{j\neq i} P(|t_i - t_j| \leq \binstep) - \sum_{i=1}^{\hdim} P(|t_i| \geq \binlim) & \text{(Union bound)} \\
    & \geq 1 - 3^{\frac{4}{3}} \left ( \frac{\binstep}{c} \right )^{\frac{2}{3}}\hdim^2 - \sum_{i=1}^{\hdim} P(|t_i| \geq \binlim) & \text{(Lemma \ref{lem:anti-conc-bin})} \\
    & \geq 1 - 3^{\frac{4}{3}} \left ( \frac{\binstep}{c} \right )^{\frac{2}{3}}\hdim^2 - \frac{2}{\pi \binlim} \hdim & \text{(Lemma \ref{lem:conc-bin})}
\end{align*}
Let $\delta = 3^{\frac{4}{3}} \left ( \frac{\binstep}{c} \right )^{\frac{2}{3}}\hdim^2 - \frac{2}{\pi \binlim} \hdim$.
Set $l = h^2$. Then, solving for $\binstep$ yields $\epsilon = 3^{-2}c\frac{(\delta + \frac{2\pi}{h})^{\frac{3}{2}}}{h^3}$. So, Algorithm \ref{algo:z-recovery} succeeds with probability $1- \delta$ and uses less than $\hdim \log \left ( \frac{\binlim}{\epsilon} \right)$ queries, which is upper bounded as the following.
\begin{align*}
    \hdim \log \left ( \frac{\binlim}{\epsilon} \right) & = \hdim \log  \left ( \frac{h^2}{ 3^{-2}c\frac{(\delta + \frac{2\pi}{h})^{\frac{3}{2}}}{h^3}} \right ) \\
    & = 5 \hdim \log  \left ( \frac{h}{ 3^{-2}c (\delta + \frac{2\pi}{h})^{\frac{3}{2}}} \right ) \\
    & \leq O \left (h \log \frac{h}{\delta} \right )
\end{align*}
\end{proof}

\subsection{Step two: recovering the signs of the normal vectors} \label{sec:recover-s}
Algorithm \ref{algo:z-recovery} recovers unsigned, weighted normal vectors: $w_iA_i$ or $-w_iA_i$ for $i \in [h]$. But to identify the function $f$, we still need the sign of these vectors. In Algorithm \ref{algo:s-recovery}, we recover a vector $s \in \{-1, 0, 1\}^{2h}$ that encodes this sign information. Precisely, Algorithm \ref{algo:s-recovery} returns a vector $s$ such that 
\begin{align*}
    f(x) = \begin{bmatrix} \max(Zx, 0)^\top & \max(-Zx, 0)^\top \end{bmatrix}s \,.
\end{align*}
where
\begin{align*}
    s_i = \begin{cases}
        \sgn(w_i) & 1 \leq i \leq h,\, z_i = |w_i|A_i \\
        0 & h + 1 \leq i \leq 2h,\,  z_i = |w_i|A_i \\
        0 & 1 \leq i \leq h,\, z_i = -|w_i|A_i \\
        \sgn(w_i) & h + 1 \leq i \leq 2h,\, z_i = -|w_i|A_i
    \end{cases} \,.
\end{align*}

It is clear that if Algorithm \ref{algo:s-recovery} returns the vector $s$, then the function $f$ is identified. Algorithm \ref{algo:s-recovery} solves $2h$ linear equations to determine the vector $s$. To prove correctness of Algorithm \ref{algo:s-recovery}, we show that the $2h$ query points picked in the algorithm lead to a determined set of linear equations.

\begin{lem}
\label{lem:fullrank}
Let $Z \in \mathbb{R}^{h \times d}$ be a matrix such that $Z_{p(i)} = w_iA_i$ or $Z_{p(i)} = -w_iA_i$ for a permutation $p$ of $[h]$. Let $x_i$ denote the $i$-th column of a matrix $X \in \mathbb{R}^{d \times h}$. Suppose $\nabla f(x_1) = \dots = \nabla f(x_h)$, $(ZX)_{ij} \neq 0$, and $Rank(ZX) = h$ for all $i, j \in [h]$. Then, the $2h \times 2h$ matrix defined as
\begin{align}
    \label{lem:matrix}
    M = \begin{bmatrix}
        \max(ZX, 0)^\top & \max(-ZX, 0)^\top \\
        \max(-ZX, 0)^\top & \max(ZX, 0)^\top
    \end{bmatrix}
\end{align}
is full-rank.
\end{lem}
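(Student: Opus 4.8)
The plan is to reduce the full-rank question for the $2h\times 2h$ block matrix $M$ to full-rank questions for two $h\times h$ matrices, and then dispatch both from the hypotheses. Write $B=ZX$ and split it entrywise into positive and negative parts, $P=\max(B,0)$ and $N=\max(-B,0)$, so that $B=P-N$; moreover, since the hypothesis $(ZX)_{ij}\neq 0$ forces exactly one of $P_{ij},N_{ij}$ to be nonzero in each position, we also get $P+N=|B|$, the entrywise absolute value. In this notation $M=\bigl[\begin{smallmatrix}P^\top & N^\top\\ N^\top & P^\top\end{smallmatrix}\bigr]$. Conjugating $M$ by the invertible matrix $\bigl[\begin{smallmatrix}I_h & I_h\\ I_h & -I_h\end{smallmatrix}\bigr]$ block-diagonalizes it: a direct computation gives $\bigl[\begin{smallmatrix}I & I\\ I & -I\end{smallmatrix}\bigr]\,M\,\bigl[\begin{smallmatrix}I & I\\ I & -I\end{smallmatrix}\bigr]=2\bigl[\begin{smallmatrix}(P+N)^\top & 0\\ 0 & (P-N)^\top\end{smallmatrix}\bigr]=2\bigl[\begin{smallmatrix}|B|^\top & 0\\ 0 & B^\top\end{smallmatrix}\bigr]$. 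Hence $M$ is full-rank if and only if both $B=ZX$ and $|B|$ are full-rank.

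The first is immediate: $\mathrm{Rank}(ZX)=h$ is one of the hypotheses. For the second, I claim $|B|$ differs from $B$ only by left multiplication by a $\pm1$ diagonal matrix, which preserves rank; this is where the hypothesis $\nabla f(x_1)=\dots=\nabla f(x_h)$ enters. Recall $\nabla f(x)=\sum_{i=1}^h g(x)_i w_iA_i$ with $g(x)_i=\mathbb{I}\{A_i^\top x\geq 0\}$. Since $ZX$ is $h\times h$ of rank $h$, the rows of $Z$ are linearly independent, and because these rows are $\pm w_iA_i$, the vectors $w_1A_1,\dots,w_hA_h$ are linearly independent (in particular every $w_i\neq 0$). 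Consequently the map $g(x)\mapsto\nabla f(x)$ is injective on $\{0,1\}^h$, so $\nabla f(x_1)=\dots=\nabla f(x_h)$ forces $g(x_1)=\dots=g(x_h)$; call this common activation pattern $g$.

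Now fix a row index: row $p(i)$ of $B$ has entries $(ZX)_{p(i),j}=\sigma_i w_i\,(A_i^\top x_j)$ where the sign $\sigma_i\in\{\pm1\}$ (from $Z_{p(i)}=\sigma_i w_iA_i$) and the scalar $w_i$ depend only on $i$. Since $(ZX)_{p(i),j}\neq 0$ we have $A_i^\top x_j\neq 0$, and since $g(x_j)_i=\mathbb{I}\{A_i^\top x_j\geq 0\}$ takes the same value for all $j\in[h]$, the sign of $A_i^\top x_j$ is constant over $j$. Therefore every entry in row $p(i)$ of $B$ has the same sign, so $|B|=DB$ where $D$ is the diagonal matrix whose $p(i)$-th entry is $\sigma_i\,\sgn(w_i)\,\sgn(A_i^\top x_j)\in\{\pm1\}$. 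As $D$ is invertible, $\mathrm{Rank}(|B|)=\mathrm{Rank}(B)=h$, and combining with the block reduction above shows $M$ is full-rank.

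I expect the only non-mechanical point to be recognizing that equality of gradients forces equality of activation patterns (via linear independence of the rows of $Z$), and distilling this into the statement that each row of $ZX$ has constant sign; the block-diagonalization is then routine linear algebra. It is worth double-checking the edge cases that $(ZX)_{ij}\neq 0$ is exactly what is needed both to conclude $P+N=|B|$ and to get a strict (rather than weak) sign consistency from the shared activation pattern.
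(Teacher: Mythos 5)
Your proof is correct, and it handles the linear algebra differently from the paper. The paper's proof observes that equal gradients plus $(ZX)_{ij}\neq 0$ force a common activation pattern, then says one may ``negate rows of $Z$'' to assume $(ZX)_{ij}>0$ for all $i,j$, whereupon $\max(ZX,0)=ZX$, $\max(-ZX,0)=0$, and $M$ is literally block diagonal with $\det(M)=\det^2(ZX)>0$. (That WLOG is really a column permutation of $M$ --- negating row $k$ of $Z$ swaps columns $k$ and $h+k$ --- which the paper does not spell out.) You instead keep $Z$ as given, conjugate $M$ by $\bigl[\begin{smallmatrix}I&I\\I&-I\end{smallmatrix}\bigr]$ to block-diagonalize it into $|ZX|^\top$ and $(ZX)^\top$, and then show $|ZX|=D\,ZX$ for a $\pm1$ diagonal $D$ using the constant-sign-per-row observation. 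The shared key step is the same in both: equality of gradients forces equality of activation patterns, hence each row of $ZX$ has a constant sign. Your write-up is actually more complete than the paper's on two points it leaves implicit --- that the injectivity of $g\mapsto\nabla f$ needed to deduce equal activation patterns comes from linear independence of the rows of $Z$ (which you correctly extract from $\mathrm{Rank}(ZX)=h$), and that the sign normalization preserves the rank of $M$. The paper's route is shorter; yours makes the invariance explicit and cleanly isolates the two full-rank conditions ($ZX$ and $|ZX|$) that $M$'s rank reduces to. No gaps.
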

\begin{proof}
Since $\nabla f(x_1) = \dots = \nabla f(x_h)$ and $(ZX)_{ij} \neq 0$, we know $\mathbb{I}\{Zx_1 > 0 \} = \dots = \mathbb{I}\{Zx_h > 0 \}$, and that we could always negate rows of the matrix $Z$ so that $\mathbf{1} = \mathbb{I}\{Zx_1 > 0 \} = \dots = \mathbb{I}\{Zx_h > 0 \}$. Thus, we can assume without loss of generality that $(ZX)_{ij} > 0$ for all $i, j \in [h]$. Then, the matrix $M$ can be expressed as the following.
\begin{align*}
    M = \begin{bmatrix}
    (ZX)^\top & 0 \\
    0 & (ZX)^\top
    \end{bmatrix}
\end{align*}
The determinant of the matrix is $\det(M) = \det((ZX)^2 - 0) = \det^2(ZX) > 0$. Thus, $M$ is a full-rank matrix.
\end{proof}

In Appendix \ref{sec:pick-X} we describe a simple linear program that can be used to pick a matrix $X$ that satisfies the conditions of the above Lemma \ref{lem:fullrank}. Since Algorithm \ref{algo:s-recovery} picks such a matrix $X$, Lemma \ref{lem:fullrank} immediately implies our main theorem proving correctness of Algorithm \ref{algo:s-recovery}.

\begin{thm} 
\label{thm:s-recovery}
If Algorithm \ref{algo:s-recovery} is given a matrix $Z \in \mathbb{R}^{h \times d}$ such that $Z_{p(i)} = w_iA_i$ or $Z_{p(i)} = -w_iA_i$ for a permutation $p$ of $[h]$, then it returns a vector $s \in \{-1, 0, 1\}^{2h}$ such that the function $f$ is equal to $f(x) = \begin{bmatrix} \max(Zx, 0)^\top & \max(-Zx, 0)^\top \end{bmatrix}s$.
\end{thm}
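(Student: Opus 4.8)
The plan is to reduce the claim to two facts: that the $2h\times 2h$ linear system solved inside Algorithm~\ref{algo:s-recovery} has a \emph{unique} solution, and that this unique solution is the ``ground-truth'' sign vector $s^\star$ obtained from the case analysis in Section~\ref{sec:recover-s}. For the first fact, note that Algorithm~\ref{algo:s-recovery} chooses $X$ via the linear program of Appendix~\ref{sec:pick-X}, which is exactly engineered so that $\nabla f(x_1)=\dots=\nabla f(x_h)$, $(ZX)_{ij}\neq 0$, and $\mathrm{Rank}(ZX)=h$. These are precisely the hypotheses of Lemma~\ref{lem:fullrank}, so the matrix $M$ assembled in the algorithm is invertible. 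Hence $Ms=(f(x_1),\dots,f(x_h),f(-x_1),\dots,f(-x_h))^\top$ has a unique solution, and it remains only to identify it.

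For the second fact, I would introduce $s^\star\in\{-1,0,1\}^{2h}$ defined by the four displayed cases of Section~\ref{sec:recover-s} and prove the stronger, query-independent identity
\begin{align*}
    f(x) = \begin{bmatrix}\max(Zx,0)^\top & \max(-Zx,0)^\top\end{bmatrix} s^\star \qquad \text{for all } x\in\mathbb{R}^{d},
\end{align*}
by matching terms one hidden unit at a time. Fix $i\in[h]$ and let $j=p(i)$, so the $j$-th row of $Z$ is $z_j=\tau_i w_i A_i$ for some $\tau_i\in\{-1,+1\}$; equivalently $z_j=|w_i|A_i$ when $\tau_i=\sgn(w_i)$ and $z_j=-|w_i|A_i$ otherwise. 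Since $z_j^\top x=\tau_i w_i A_i^\top x$, applying $\mathrm{ReLU}$ to $\pm z_j^\top x$ isolates one of the half-spaces $\{A_i^\top x\ge 0\}$, $\{A_i^\top x\le 0\}$ up to the positive scalar $|w_i|$. A short calculation then shows that exactly one of $s^\star_j$, $s^\star_{h+j}$ is nonzero, equals $\sgn(w_i)$, and sits on the $\max(z_j^\top x,0)$ term when $z_j=|w_i|A_i$ and on the $\max(-z_j^\top x,0)$ term when $z_j=-|w_i|A_i$; in either case the pair $(s^\star_j,s^\star_{h+j})$ contributes exactly $w_i\max(A_i^\top x,0)$ to the right-hand side. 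Summing over $i$ yields the displayed identity.

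To finish, evaluate the identity at $x_1,\dots,x_h$ and at $-x_1,\dots,-x_h$, using $f(-x_k)=\begin{bmatrix}\max(-Zx_k,0)^\top & \max(Zx_k,0)^\top\end{bmatrix}s^\star$ for the lower block rows of $M$; this shows $Ms^\star=(f(x_1),\dots,f(-x_h))^\top$. By the uniqueness established above, the vector returned by Algorithm~\ref{algo:s-recovery} equals $s^\star\in\{-1,0,1\}^{2h}$, and then the displayed identity gives $f(x)=\begin{bmatrix}\max(Zx,0)^\top & \max(-Zx,0)^\top\end{bmatrix}s$ for all $x$, as claimed.

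The only genuine work lies in the sign bookkeeping of the middle step: carefully tracking how $\tau_i$ interacts with $\sgn(w_i)$ (i.e.\ whether $z_j$ is $|w_i|A_i$ or $-|w_i|A_i$) and how $\max(-u,0)$ flips to the complementary half-space, so that precisely one of the two $s^\star$-entries per unit is active. The condition $(ZX)_{ij}\neq 0$ (already guaranteed by the construction of $X$) is what rules out the degenerate boundary cases where a $\mathrm{ReLU}$ argument vanishes; everything else is routine.
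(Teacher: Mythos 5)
Your proposal is correct and follows essentially the same route as the paper: invoke Lemma~\ref{lem:fullrank} to get invertibility of $M$ (hence uniqueness of the solution to the $2h\times 2h$ system), and observe that the ground-truth sign vector $s^\star$ from the case analysis in Section~\ref{sec:recover-s} satisfies the system, so the algorithm must return it. The only difference is one of thoroughness: the paper asserts the identity $f(x)=\bigl[\max(Zx,0)^\top \;\; \max(-Zx,0)^\top\bigr]s^\star$ as clear from the definition of $s^\star$, whereas you carry out the per-unit sign bookkeeping (including the permutation $j=p(i)$, which the paper's case display glosses over) explicitly.
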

\begin{proof}
Algorithm \ref{algo:s-recovery} uses $2h$ queries to construct a $X \in \mathbb{R}^{2h \times 2h}$ that satisfies the conditions of Lemma \ref{lem:fullrank}. Thus, the resulting set of $2h$ linear equations are determined and Algorithm \ref{algo:s-recovery} returns the unique vector $s$ corresponding to its solution.
\end{proof}

Together, Theorem \ref{thm:prob-zrecovery} proving correctness of Algorithm \ref{algo:z-recovery} and Theorem \ref{thm:s-recovery} proving correctness of \ref{algo:s-recovery} imply our main Theorem \ref{thm:main-thm} that proves correctness of Algorithm \ref{algo:model-recovery}.

\begin{reptheorem}{thm:main-thm}
Suppose the unknown function~$f$ satisfies the assumptions in Section \ref{sec:model}. Then,
with probability $1 - \delta$, Algorithm \ref{algo:model-recovery} succeeds to find a function $\hat{f}$ such that $\hat{f} = f$ in $O(h \log \frac{h}{\delta})$ queries. If the Algorithm fails, then it notifies of the failure.
\end{reptheorem}
\begin{proof}
By Theorem \ref{thm:prob-zrecovery}, with probability $1-\delta$, Algorithm \ref{algo:z-recovery} returns a matrix $Z$ that satisfies the conditions of Theorem \ref{thm:s-recovery} in $O(h\log \frac{h}{\delta})$ queries. By Theorem \ref{thm:s-recovery}, Algorithm \ref{algo:s-recovery} then returns a vector $s$ such that $f(x) = \begin{bmatrix} \max(Zx, 0)^\top & \max(-Zx, 0)^\top \end{bmatrix}s$ in $O(h)$ queries. Thus, overall Algorithm \ref{algo:model-recovery} succeeds with probability $1-\delta$ in $O(h \log h)$ queries.
 \end{proof}


\section{Experimental design} \label{sec:exps}
While our theoretical analysis provides insight into the power of gradient queries over membership queries, it is specific to a two-layer ReLU network. To complement our theory, we also experimentally investigate the impact of gradients on reconstructing models used in practice.

In order to compare to reconstructing with membership queries alone, our method for learning with gradients is a modification of a simple heuristic used to reconstruct models from membership queries: training a new classifier $\hat{f}$ to match the outputs of $f$ ~\citep{tramer2016stealing,papernot2017practical}. When we have access to gradients we can also train the classifier $\hat{f}$ to match the gradients of $f$ by minimizing a loss on the gradients: $\ell_{G}(x) = \|\nabla f(x) - \nabla \hat{f}(x) \|^2_2$. Furthermore, we can trade off between the gradient loss $\ell_{G}$ with a loss on the membership queries $\lambda \ell_{M}$ to create a joint loss $\ell_{J}(x) = \ell_{G}(x) + \lambda \ell_{M}(x)$.

We test how gradient queries help by measuring the accuracy of $\hat{f}$ when trained using $\ell_{J}(x)$ versus when trained only on the membership query loss, $\ell_{M}(x)$. In our experiments $\ell_{M}(x)$ is the cross-entropy loss between $f(x)$ and $\hat{f}(x)$.  Next, we describe our experimental design in detail.

\textbf{Manipulated factors.} We manipulate three independent variables. First, we manipulate the \textit{type of query}. We test membership only queries as well as membership and gradients. Further, because in practice explanations often provide a processed version of the gradients, instead of the raw gradients, we also test membership and gradients processed with SmoothGrad, a saliency map denoising technique \citep{smilkov2017smoothgrad}. Instead of returning the raw gradient $\nabla f(x)$, SmoothGrad returns an average of gradients around the input $x$: $\widetilde{\nabla} f(x) = \sum_{i=1}^{N} \frac{1}{N} \nabla f(x + z_i)$ where $z_i \sim \mathcal{N}(0, \sigma I)$. 

Second, we manipulate the \textit{complexity of the task} to test whether gradients help more or less on more complex tasks. We experiment on both MNIST and CIFAR10.
Finally, we manipulate the \textit{complexity of the model class} to test whether gradients help more when the model is simpler. We train three models on each of the two tasks that are chosen to display a range of complexity.

\textbf{Dependent measure.} We measure the accuracy of our reconstructed classifier $\hat{f}$ on a test set of 10,000 images from the task (MNIST or CIFAR10).

\textbf{Experimental procedure.} We split our datasets into three parts:
\begin{itemize}[noitemsep,nolistsep]
    \item A training set of images and ground-truth labels for the true classifier $f$. The training set for MNIST has 50,000 examples and for CIFAR10 has 40,000 examples.
    \item A training set of 10,000 images for the reconstructed classifier $\hat{f}$. Note that $\hat{f}$ does not have access to ground-truth labels, so it must query $f$ for labels.
    \item A test set of 10,000 images and ground-truth labels for $f$ and $\hat{f}$.
\end{itemize}

We first train models to serve as the true classifier $f$. We train three types of models on MNIST: a 1-layer network (multinomial logistic regression), a 2-layer neural network with ReLu activations, and a network with two convolutional layers (each followed by a max-pool layer) followed by two dense layers. We also train three types of models on CIFAR10: the same convolutional network used for MNIST (with the input dimension changed appropriately), a VGG11 network \citep{simonyan2014very}, and a ResNet-18 network \citep{he2016deep}.

Next, we train a new classifier $\hat{f}$ from the same model class as the true classifier $f$. The inputs $x$ given to $\hat{f}$ are randomly sampled from the training set for $\hat{f}$. After training, we compute the accuracy of our reconstructed classifier $\hat{f}$ on the test set.

\textbf{Follow-up experiments: unknown model class and data distribution} An adversary trying to reconstruct the classifier $f$ may not know the model class of $f$ or the data distribution. So, in follow-up experiments we (1) reconstruct the classifier $f$ with a classifier $\hat{f}$ from a different model class and (2) reconstruct the classifier $f$ using Gaussian generated queries. In these follow-up experiments we analyze the same factors, but with a subset of conditions. 

\section{Experimental results and discussion}
\begin{figure} 
    \centering
    \includegraphics[scale=0.45]{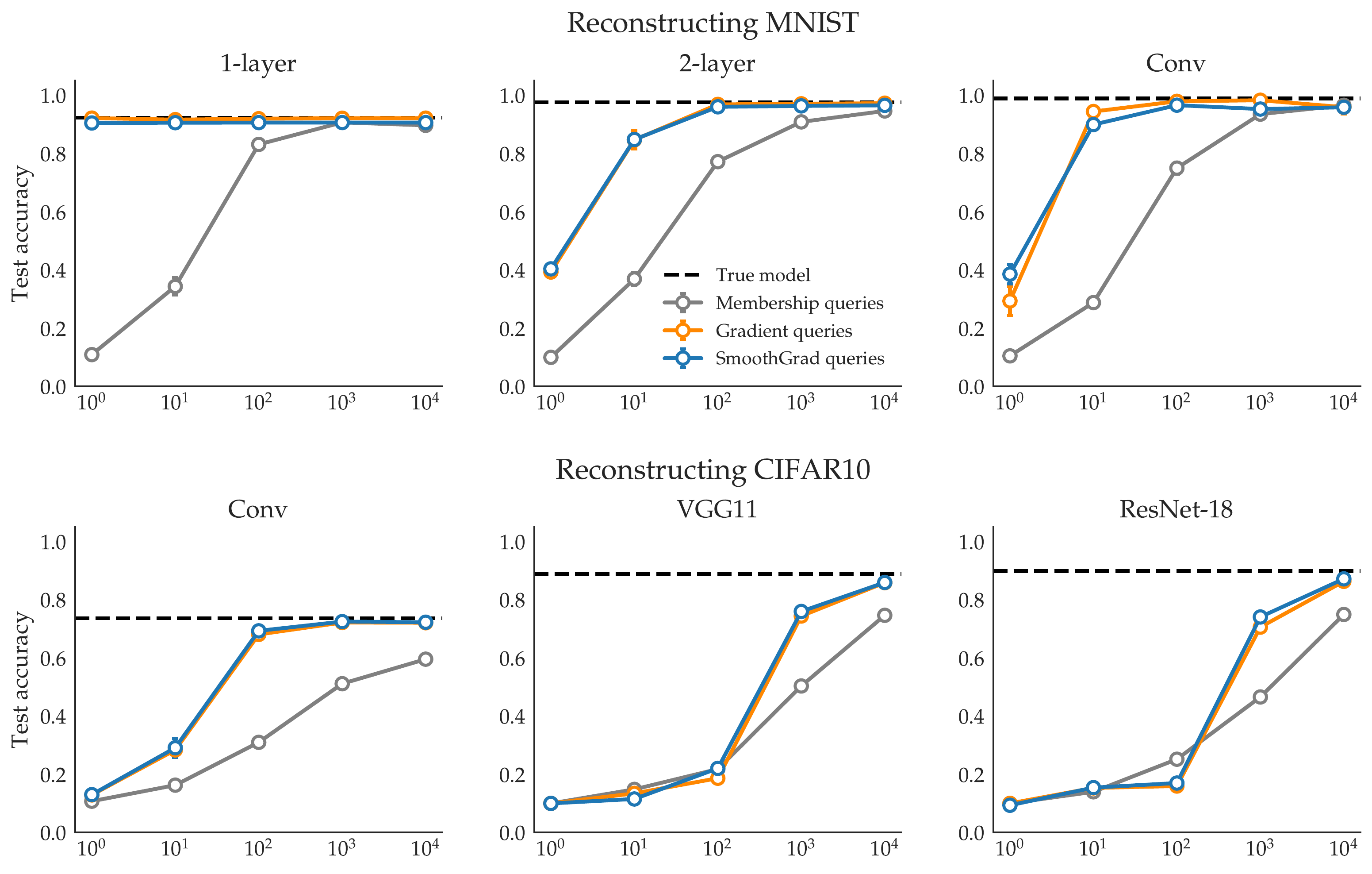}
    \caption{Access to gradients improves the accuracy of the recovered model. The improvement is approximately the same even with gradients processed by SmoothGrad.}
    \label{fig:main-results}
\end{figure}

\subsection{Main experiments: gradient queries versus membership queries} \label{sec:gqs_vs_mqs}
Figure \ref{fig:main-results} shows the results of our main experiments, described in Section \ref{sec:exps}.

\textbf{Type of query.} Across all experiments, training with gradient queries leads to orders of magnitude fewer queries required to learn the model. For example, for the MNIST convolutional model we get to 95\% accuracy in 10 gradient queries, compared to 1000 membership queries. We find practically no difference between gradient queries and SmoothGrad queries, despite picking the hyperparameters for SmoothGrad that produced the best saliency maps (See Appendix \ref{app:smoothgrad}).

\textbf{Complexity of model class.} We find that the gap in performance between gradient queries and membership queries is larger for models of lower complexity. 

As an extreme case, consider the 1-layer network on MNIST. We find a 1000x decrease in the number of queries required. With gradient queries it takes only one query to reconstruct the model (get the same performance as the original classifier). This makes sense because with gradient queries the 1-layer network is identifiable in one query, compared to 784 membership queries.\footnote{The 1-layer network is $f(x) = \sigma(w^\top x)$ where $\sigma$ is the sigmoid function and $w \in \mathbb{R}^{784}$. The model parameters $w$ are equal to $\frac{1}{f(x)(1 - f(x))}\nabla f(x)$, and thus, identifiable in one gradient and membership query.}

On MNIST with the 2-layer or convolutional network we find a 100x decrease in the number of queries needed to reconstruct the model. On CIFAR10 we find that the convolutional network (which is the same as the convolutional network used for MNIST) also has at least a 100x decrease in the number of queries needed. On the other hand, VGG11 and Resnet-18 show only a 10x decrease in the number of queries needed to reach 75\% accuracy.

\textbf{Complexity of task.} We find that the relative reduction in queries needed seems to depend on the complexity of the model class, rather than the complexity of the task. But, not surprisingly, the absolute number of queries needed increases with the complexity of the task.

On both MNIST and CIFAR10 gradient queries lead to a 100x decrease for reconstructing the convolutional network, suggesting that for the relative decrease in query complexity depends more on the complexity of the model class than the complexity of the task. However, as might be expected, for both gradient and membership queries the absolute number of queries needed increases as the complexity of the task increases. On MNIST the convolutional model is reconstructed in 10 gradient queries, compared to 1000 membership queries. On CIFAR10 the convolutional model is reconstructed in 100 gradient queries, compared to 10,000 membership queries.

\begin{figure} 
    \centering
    \includegraphics[scale=0.45]{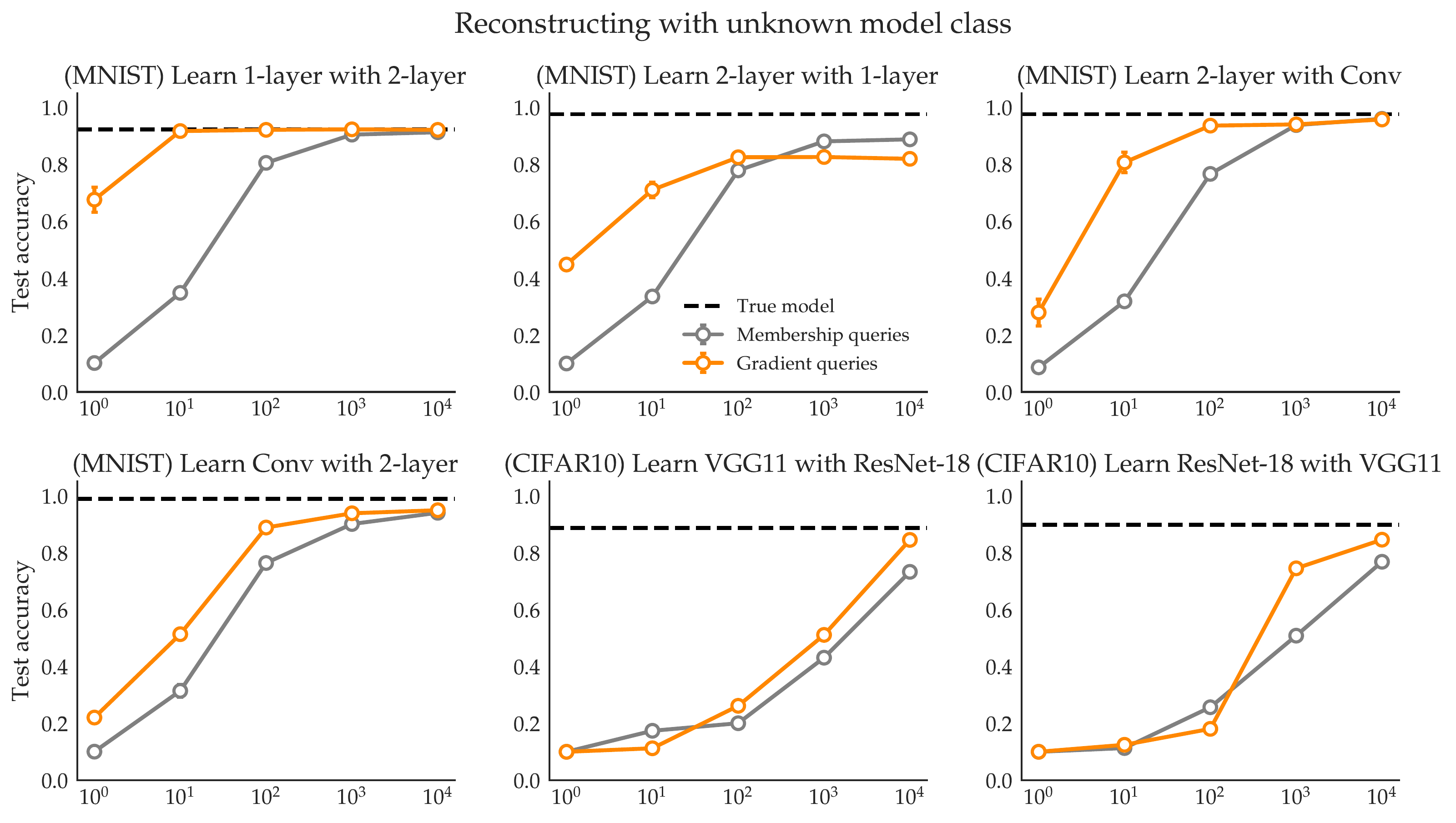}
    \setlength{\belowcaptionskip}{-15pt}
    \caption{Gradients still help when the model is unknown, but they help more when the reconstructed classifier is from a model class that is more complex than the model class of the true classifier.}
    \label{fig:diff-archs}
\end{figure}
\subsection{Unknown model class} \label{sec:unk_arch}
In the scenario where we do not know the true model class beforehand, we experiment with:
\begin{itemize}[noitemsep, nolistsep]
    \item MNIST: Reconstructing the 1-layer model with the 2-layer network (and vice versa).
    \item MNIST: Reconstructing the 2-layer model with the  convolutional network (and vice versa).
    \item CIFAR10: Reconstructing the VGG11 model with the ResNet-18 network (and vice versa).
\end{itemize}
We refer the reader to Section \ref{sec:exps} for details on the models. Figure \ref{fig:diff-archs} displays our results.

We find that gradient queries seem to help more when the the model class of $\hat{f}$ is more complex than the true classifier $f$. For example, we see a 100x decrease in the number of queries needed to reconstruct MNIST 1-layer with a 2-layer network. But, we only get an initial 10x decrease in the number of queries needed to reconstruct MNIST 2-layer with a 1-layer network. Similarly reconstructing the 2-layer network with the convolutional network works much better than reconstructing the convolutional network with the 2-layer network.

We have been fairly loose when referring to the relative complexities of different models, and it is unclear to us how to compare VGG11 and ResNet-18 in terms of complexity. Interestingly however, we find that although gradient queries still lead to a 10x decrease when reconstructing ResNet-18 with VGG11, they help very little when reconstructing a VGG11 model with a ResNet-18 network.

\begin{figure}
    \centering
    \includegraphics[scale=0.45]{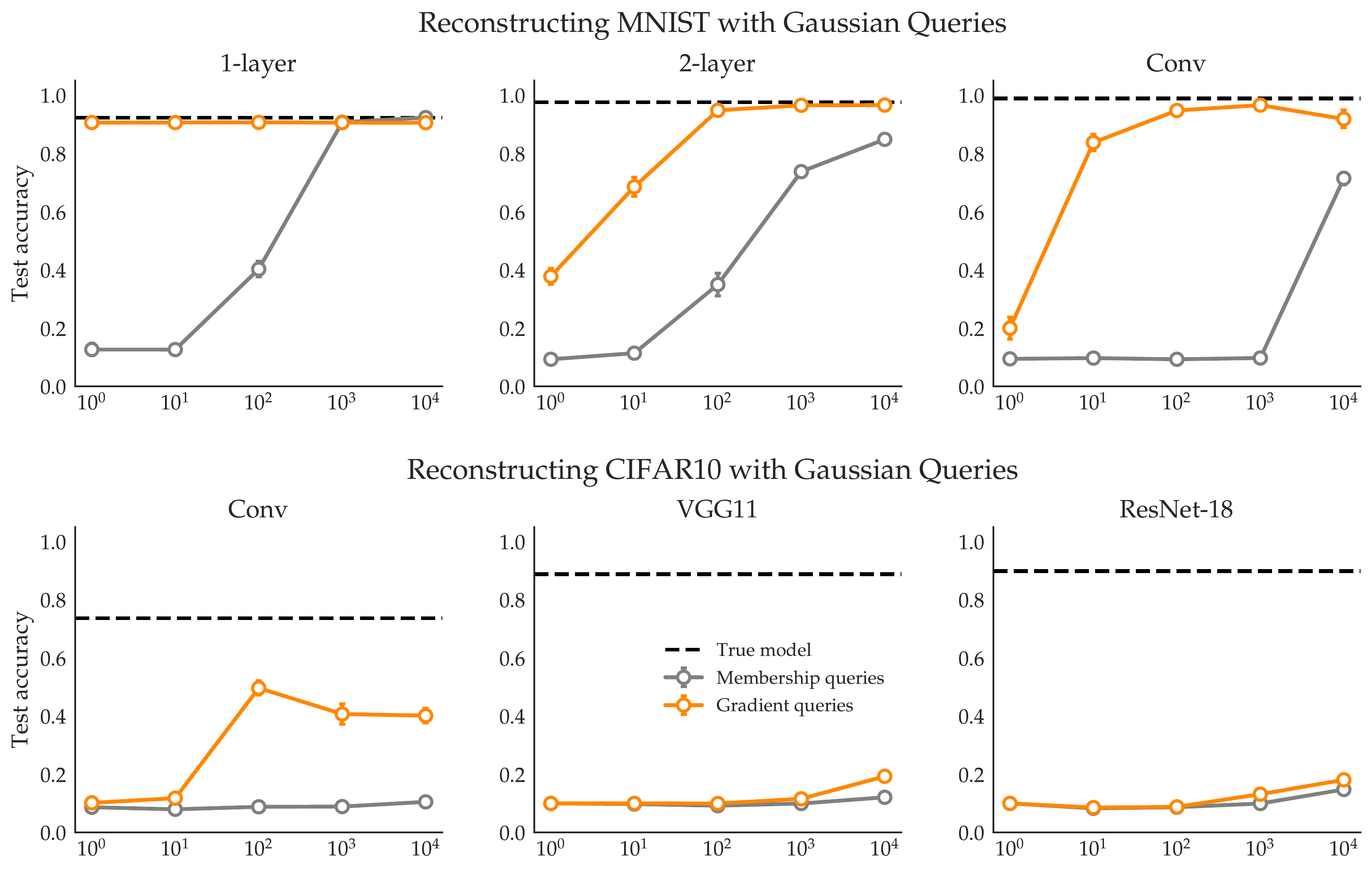}
    \caption{When querying with Gaussian generated inputs, we seem to see a \textit{larger} gap between the performance of gradient queries and the performance of membership queries.}
    \label{fig:gaussian}
\end{figure}

\subsection{Unknown data distribution} \label{sec:unk_dist}
We now analyze the setting where we do not know the data distribution. Instead we query using randomly generated Gaussian queries, i.e $x \sim \mathcal{N}(0, I_d)$. Figure \ref{fig:gaussian} displays our results. 

On MNIST we find that Gaussian queries lead to a \textit{greater} gap in performance between gradient and membership queries, compared to when using images from the data distribution.\footnote{On the 1-layer network we see the same relative decrease because it is identifiable with a single gradient + membership query or 784 membership queries, independent of the distribution the queries are generated from.} On the MNIST 2-layer network, we see at least a 1000x decrease, compared to the 100x decrease we saw in Section \ref{sec:exps} when using queries from the data distribution. On the MNIST convolutional network, we see that in 10 gradient queries we get to 84\% accuracy. On the other hand, it takes 10,000 membership queries to learn at all, and even then we get to only 71\%. Thus, we seem to get at least a 1000x decrease, compared to the 100x reduction we saw when using queries from the data distribution.

On CIFAR10 it is harder to interpret the results because the performance degrades so much for both gradient and membership queries. However, at least in the convolutional network, the gap between gradient and membership queries also seems to increase. The reconstructed model gets to 50\% accuracy in 10 gradient queries, but only to 11\% accuracy in 10,000 membership queries.

\section{Related work}
\citeauthor{tramer2016stealing} show how models can be reconstructed in practice through \textit{prediction APIs} \cite{tramer2016stealing}. Our work addresses the complementary threat of model leakage through a hypothetical \textit{explanation API}. While differential privacy can help guard against attacks from prediction APIs~\cite{dwork2018privacy}, it is not clear if this is a viable approach for preventing reconstruction from explanations. 

Learning a model via a prediction API instantiates the framework  of \textit{learning with membership queries}, in which the learner gets to actively query an oracle for labels to inputs of its choosing \citep{angluin1988queries}. In our work, we propose a complementary learning framework: \textit{learning from input gradient queries}. Similar to membership queries and prediction APIs, we believe that learning from gradients is likely to be the theoretical framework underpinning reconstruction from explanation APIs. 

We give a near-optimal algorithm for learning a two-layer network with ReLU activations through gradient queries. The geometric intuition for our algorithm is  similar to the work of \citeauthor{baum1991neural} for learning two-layer linear threshold networks with membership queries \cite{baum1991neural}.

\bibliography{main}
\bibliographystyle{plainnat}

\begin{appendices}

\section{Omitted proofs for Algorithm \ref{algo:z-recovery}} 
\label{app:step_one}
First, we prove the following two lemmas that will be useful in proving the anti-concentration and concentration bounds in Lemma \ref{lem:anti-conc-bin} and Lemma \ref{lem:conc-bin}.
\begin{lem}
\label{lem:chi2-diff}
(Anti-concentration of difference of $\chi^2_{2}$ variables) \\
Let $Q, R \sim \chi^2_{2}$. Then, $P(|Q-R| \leq \epsilon) \leq \epsilon$
for $\epsilon > 0$.
\end{lem}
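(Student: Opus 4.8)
The plan is to exploit the fact that the $\chi^2_2$ distribution has an explicit, uniformly bounded density. Recall that a $\chi^2_2$ random variable is exponential with rate $\tfrac12$, so its density is $p(x) = \tfrac12 e^{-x/2}$ for $x \ge 0$ and $0$ otherwise; in particular $p(x) \le \tfrac12$ everywhere, with the maximum attained at $x = 0$. I take $Q$ and $R$ to be independent, as in the intended application of this lemma.

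Given this, I would simply condition on the value of $R$. For any fixed $r \ge 0$,
\[
  P\big(|Q - R| \le \epsilon \mid R = r\big) = \int_{\max(0,\, r-\epsilon)}^{\,r+\epsilon} p(x)\, dx \le \tfrac12\big((r+\epsilon) - \max(0,\, r-\epsilon)\big) \le \tfrac12 \cdot 2\epsilon = \epsilon,
\]
since the interval of integration has length at most $2\epsilon$ and the integrand never exceeds $\tfrac12$. Taking the expectation over $R$ then yields $P(|Q - R| \le \epsilon) = \mathbb{E}_R\big[P(|Q - R| \le \epsilon \mid R)\big] \le \epsilon$, which is exactly the claimed bound.

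A slightly sharper alternative would be to observe that $D := Q - R$, being the difference of two i.i.d.\ exponentials, has a Laplace density $\tfrac14 e^{-|d|/2}$, whence $P(|D| \le \epsilon) = 1 - e^{-\epsilon/2} \le \epsilon/2 \le \epsilon$ using $1 - e^{-t} \le t$; but the conditioning argument is cleaner and already suffices. There is essentially no obstacle here beyond recalling the form of the $\chi^2_2$ density and being mindful of the truncation at $0$ when $r < \epsilon$ — and that truncation only shortens the integration interval, so it can only help.
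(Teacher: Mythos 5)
Your proof is correct and takes essentially the same approach as the paper: both bound the probability that $Q$ lands in a window of width $2\epsilon$ using the fact that the $\chi^2_2$ (exponential) density is at most $\tfrac12$, the paper doing so by reducing to the worst-case interval $[0,2\epsilon]$ via the decreasing density and you by a uniform density bound after conditioning on $R$. Your version is slightly more explicit about the conditioning step and the (implicitly assumed) independence of $Q$ and $R$, but the argument is the same.
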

\begin{proof}
Recall that the cumulative distribution function of a $\chi^{2}_{d}$ random variable $Q$ is
\begin{align*}
    P(Q \leq x) = \frac{\gamma(\frac{d}{2}, \frac{x}{2})}{\Gamma(\frac{d}{2})}
\end{align*}
where $\gamma(s, z) = \int_{0}^{z} t^{s-1}e^{-t}dt$ is the lower incomplete gamma function and $\Gamma(z) = \int_{0}^{\infty} t^{z-1}e^{-t}dt$ is the gamma function. When $d = 2$, $P(Q \leq x)$ simplifies to $\int_{0}^{x/2} e^{-z}dz$. Thus,
\begin{align*}
P(|Q-R| \leq \epsilon) & = P(R - \epsilon \leq Q \leq R + \epsilon) \\
& \leq P(0 \leq Q \leq 2\epsilon) \\
& = \int_{0}^{\epsilon}e^{-x}dx \\
& \leq \int_{0}^{\epsilon}dx \\
& = \epsilon
\end{align*}
\end{proof}
\begin{lem}
\label{lem:gauss-chi2}
(Distribution of product of independent Gaussians) \\ 
Let $X, Y \sim \mathcal{N}(0, 1)$. Then $XY$ can be written as
\begin{align*}
XY = \frac{1}{2}(Q-R)
\end{align*}
where $Q, R \sim \chi^2_{1}$ are independent.
\end{lem}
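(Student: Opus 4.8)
The statement to prove is Lemma~\ref{lem:gauss-chi2}: if $X, Y \sim \mathcal{N}(0,1)$ are independent, then $XY = \tfrac{1}{2}(Q - R)$ where $Q, R \sim \chi^2_1$ are independent.

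The plan is to use the classical polarization-type identity $XY = \tfrac{1}{4}\bigl((X+Y)^2 - (X-Y)^2\bigr)$ and then argue that the two squared terms, suitably normalized, are independent chi-squared variables. First I would define $U = X + Y$ and $V = X - Y$. Since $(X, Y)$ is a standard bivariate Gaussian, the linear map $(X,Y) \mapsto (U,V)$ produces a jointly Gaussian pair; I would compute $\operatorname{Var}(U) = \operatorname{Var}(V) = 2$ and $\operatorname{Cov}(U, V) = \operatorname{Var}(X) - \operatorname{Var}(Y) = 0$, so $U$ and $V$ are \emph{independent} $\mathcal{N}(0,2)$ random variables (uncorrelated jointly Gaussian variables are independent). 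Then $U/\sqrt{2}$ and $V/\sqrt{2}$ are independent standard normals, so $Q := U^2/2$ and $R := V^2/2$ are independent $\chi^2_1$ variables.

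It remains to check the algebraic identity: $\tfrac{1}{2}(Q - R) = \tfrac{1}{2}\bigl(\tfrac{U^2}{2} - \tfrac{V^2}{2}\bigr) = \tfrac{1}{4}\bigl((X+Y)^2 - (X-Y)^2\bigr) = \tfrac{1}{4}\cdot 4XY = XY$, using the elementary expansion $(X+Y)^2 - (X-Y)^2 = 4XY$. This establishes both the distributional claim and the identity simultaneously, since $Q$ and $R$ are defined as honest functions of $X$ and $Y$.

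There is essentially no hard obstacle here; the only point requiring a little care is the justification that $U$ and $V$ are not merely uncorrelated but genuinely independent, which relies on their joint Gaussianity — hence it is important to note at the outset that $(U,V)$ is obtained from $(X,Y)$ by a fixed linear transformation, so the pair is jointly normal. Everything else is a routine variance computation and the polarization identity.
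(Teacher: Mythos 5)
Your proof is correct and follows essentially the same route as the paper's: the polarization identity $XY = \tfrac{1}{4}\bigl((X+Y)^2 - (X-Y)^2\bigr)$, zero covariance of $X+Y$ and $X-Y$ implying independence by joint Gaussianity, and identification of the normalized squares as independent $\chi^2_1$ variables. Your explicit remark that uncorrelatedness suffices only because the pair is jointly Gaussian is a small but welcome addition of rigor over the paper's version.
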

\begin{proof}
$XY$ can be rewritten as
\begin{align*}
XY = \frac{1}{4}((X+Y)^2 - (X-Y)^2).
\end{align*}
Since $\text{Cov}(X + Y, X - Y) = 0$, we know $X + Y$ and $X - Y$ are independent random variables from a $\mathcal{N}(0, 2)$ distribution. Thus, we can express $(X+Y)^2$ and $(X-Y)^2$ as $(X+Y)^2 = 2Q$ and $(X-Y)^2=2R$ where $Q, R$ are independent $\chi^{2}_{1}$ random variables. Thus, $XY = \frac{1}{2}(Q-R)$.
\end{proof}

\begin{replemma}{lem:anti-conc-bin}
Let $a, b \in \mathcal{S}^{d-1}$ be unit vectors such that $|\langle a, b \rangle| \leq 1 - c$ for some scalar $c \in [0, 1]$. Suppose we pick random vectors $u, v \sim \mathcal{N}(0, I_d)$. Let $t_1, t_2 \in \mathbb{R}$ be scalars such that $\langle a, u + t_1 v \rangle = 0$ and $\langle b, u + t_2 v \rangle = 0$.\footnote{With probability one such a $t$ exists.} Then,
\begin{align*}
   P(|t_1 - t_2| \leq \epsilon) \leq 3^{\frac{4}{3}}\left (\frac{\epsilon}{c} \right )^{\frac{2}{3}}.
\end{align*}
\end{replemma}
\begin{proof}
Solving for the scalars $t_1$ and $t_2$ yields
\begin{align*}
    t_1 = -\frac{\langle a, u \rangle}{\langle a, v \rangle}, ~t_2 = -\frac{\langle b, u \rangle}{\langle b, v \rangle}.
\end{align*}
Let $a^{\perp}$ be a unit vector orthogonal to $a$. The vector $b$ can be expressed as $b = \beta_1 a + \beta_2 a^{\perp}$ where $\beta_1, \beta_2 \in \mathbb{R}$ and $\beta_1 = \langle a, b \rangle$. The coefficient $\beta_2$ can be lower bounded as the following.
\begin{align*}
    \beta_2 = \sqrt{1 - \beta_1^2} \geq \sqrt{2c - c^2} \geq c
\end{align*}
Using this expression for the vector $b$ we can rewrite $|t_1 - t_2|$ as
\begin{align}
    \label{eq:thres}
    |t_1 - t_2| & = \left | \frac{\langle a, u \rangle}{\langle a, v \rangle} - \frac{\langle b, u \rangle}{\langle b, v \rangle} \right | \nonumber \\
    & = \frac{|\beta_2(\langle a, u \rangle \langle a^{\perp}, v \rangle - \langle a, v \rangle \langle a^{\perp}, u \rangle)|}{|\langle a, v \rangle (\beta_1 \langle a, v \rangle + \beta_2 \langle a^{\perp}, v \rangle )|} \nonumber \\
    & = \frac{|\beta_2(X_1Y_2 - X_2Y_1)|}{|X_2(\beta_1 X_2 + \beta_2 Y_2)|},
\end{align}
where $X_1 = \langle a, u \rangle, X_2 = \langle a, v \rangle, Y_1 = \langle a^{\perp}, u \rangle, Y_2 = \langle a^{\perp}, v \rangle$ are independent $\mathcal{N}(0, 1)$ random variables. To bound $P(|t_1 - t_2| \leq \epsilon)$ we can bound the numerator and denominator of (\ref{eq:thres}) separately. For all $k > 0$, the following inequality holds.
\begin{align*}
P(|t_1 - t_2| \geq \epsilon) \geq P \left (|\beta_2(X_1Y_2 - X_2Y_1)| \geq k \sqrt{\epsilon}, \,|X_2(\beta_1 X_2 + \beta_2 Y_2)| \leq \frac{k}{\sqrt{\epsilon}} \right )
\end{align*}
Applying a union bound to the complementary event yields,
\begin{equation}
\label{eq:union}
P(|t_1 - t_2| \leq \epsilon) \leq P(|\beta_2(X_1Y_2 - X_2Y_1)| \leq k \sqrt{\epsilon}) + P \left ( |X_2(\beta_1 X_2 + \beta_2 Y_2)| \geq \frac{k}{\sqrt{\epsilon}} \right ).
\end{equation}
Applying Lemma \ref{lem:gauss-chi2} to the independent products $X_1Y_2$ and $X_2Y_1$ simplifies the numerator to
\begin{align*}
    |\beta_2(X_1Y_2-X_2Y_1)| = \left | \frac{\beta_2}{2}(Q - R) \right|,
\end{align*}
where $Q, R \sim \chi^{2}_2$ are independent Chi-squared random variables. Then by Lemma \ref{lem:chi2-diff},
\begin{align*}
    P(|\beta_2(X_1Y_2 - X_2Y_1)| \leq k \sqrt{\epsilon}) \leq P \left ( |Q - R| \leq \frac{2k \sqrt{\epsilon}}{\beta_2} \right ) \leq \frac{2k \sqrt{\epsilon}}{\beta_2}.
\end{align*}
To upper bound the tail probability of the denominator (the second term in Equation \ref{eq:union}) note that
\begin{align*}
\mathbb{E}[(X_2(\beta_1 X_2 + \beta_2 Y_2))^2] = 3\beta_1^2 + \beta_2^2.
\end{align*}
Then by Markov's Inequality,
\begin{align*}
    P \left ( |X_2(\beta_1 X_2 + \beta_2 Y_2)| \geq \frac{k}{\sqrt{\epsilon}} \right ) =  P \left ( (X_2(\beta_1 X_2 + \beta_2 Y_2))^2 \geq \frac{k^2}{\epsilon} \right )\leq \frac{(3\beta_1^2 + \beta_2^2) \epsilon}{k^2} = \frac{3 \epsilon}{k^2}.
\end{align*}
Therefore,
\begin{align*}
    P(|t_1 - t_2| \leq \epsilon) \leq \frac{2k \sqrt{\epsilon}}{\beta_2} + \frac{3\epsilon}{k^2}.
\end{align*}
Minimizing the right-hand side with respect to $k$ yields
\begin{align*}
P(|t_1 - t_2| \leq \epsilon) \leq 3^{\frac{4}{3}} \left ( \frac{\epsilon}{\beta_2} \right )^{\frac{2}{3}} \leq  3^{\frac{4}{3}} \left ( \frac{\epsilon}{c} \right )^{\frac{2}{3}} \leq O \left ( \left ( \frac{\epsilon}{c} \right )^{\frac{2}{3}} \right ).
\end{align*}
\end{proof}

\begin{replemma}{lem:conc-bin}
Let $a \in \mathcal{S}^{d-1}$ be a unit vector. Suppose we pick random vectors $u, v \sim \mathcal{N}(0, 1)$. Let $t \in \mathbb{R}$ be the value such that $\langle a, u + t v \rangle = 0$. Then,
\begin{align*}
   P(|t| \geq l) \leq \frac{2}{\pi l}.
\end{align*}
\end{replemma}
\begin{proof}
$t = -\frac{\langle a, u \rangle}{\langle a, v \rangle}$ follows a standard Cauchy distribution. The cumulative distribution function of a standard Cauchy random variable $X$ is $P(X \leq a) = \frac{1}{\pi}\arctan(a) + \frac{1}{2}$. Thus, 
\begin{align*}
    P(|t| \geq l) & = P(t \leq -l) + P (t \geq l) \\
    & = \left ( \frac{1}{\pi}\arctan(-l) + \frac{1}{2} \right) + \left ( \frac{1}{2} - \frac{1}{\pi}\arctan(l) \right ) \\
    & = 1 - \frac{2}{\pi}\arctan(l) \\
    & \leq 1 - \frac{2}{\pi} \left ( \frac{\pi}{2} - \frac{1}{l} \right ) \\
    & = \frac{2}{\pi l}
\end{align*}
\end{proof}

\section{Picking query points in Algorithm \ref{algo:s-recovery}} \label{app:step_two}
\label{sec:pick-X}
For completeness, we show that we can easily find a matrix $X$ which satisfy the requirements of Lemma \ref{lem:fullrank} through the following steps:
\begin{enumerate}
    \item Pick a random $v \in \mathbb{R}^{d}$. Let $g(v) = 1\{Zv \geq 0\}$ and $\mathcal{C} = \{x \mid g(v) = g(x) \}$ be the cell containing $v$.
    \item Find the center $y_0 \in \mathbb{R}^d$ and radius $r \in \mathbb{R}$ of the largest $\ell_2$ ball  within $\mathcal{C} \cap [0, 1]^{d}$. The center of this ball is known as the \textit{Chebyshev center} of $\mathcal{C} \cap [0, 1]^{d}$. It is well known that the center $y_0$ and radius $r$ can be solved for through a linear program \citep{boyd2004convex}. We include the linear program for our case below.
    \begin{align*}
        & \max_{y_0, r} r \\
        & \text{subject to} \\
        & (z_i^\top y_0 + r)\sgn(z_i^\top v) \geq 0 \\
        & \mathbf{0} \leq y_0 \leq \mathbf{1}
    \end{align*}
    \item Construct a set of $d$ linearly independent vectors $\mathcal{Y} = y_1, \dots, y_d \in \mathcal{C}$ as follows.
    \begin{align*}
        & y_i = y_0 + \Delta^i \\
        & \Delta^i_j = \begin{cases}
            r/2 & i = j \\
            0 & i \neq j
        \end{cases}
    \end{align*}
    \item Let $Y \in \mathbb{R}^{d \times d}$ be the matrix whose columns are formed by the vectors in $\mathcal{Y}$. Since $Y$ has rank $d$, $\text{Rank}(ZY) = h$. Thus, we can pick $h$ vectors $x_1, \dots, x_h$ from $\mathcal{Y}$ such that $\text{Rank}(ZX) = h$ where $X$ is the matrix whose columns are the vectors $x_1, \dots, x_h$.
\end{enumerate}

\section{Reconstruction from membership queries} \label{app:membership}
We now consider how to reconstruct the two-layer ReLU neural network described in Section \ref{sec:model} with membership queries alone, rather than membership and gradient queries. We show that we can convert our algorithm into one that learns with membership queries by estimating the gradients of $f$ with membership queries.

\subsection{Membership query version of Algorithm \ref{algo:model-recovery}} \label{sec:mq-algo}
We define the membership query version of Algorithm \ref{algo:model-recovery}, referred to as Algorithm \ref{algo:model-recovery}-MQ, by replacing any use of the gradient $\nabla f(x)$ with an estimate of the gradient, $\widehat{\nabla} f(x)$, computed with $d$ membership queries. We estimate the gradient by estimating each component separately through a finite difference approximation:
\begin{align*}
    \widehat{\nabla} f(x)_j = \frac{f(x+\Delta^j) - f(x)}{s} \,,
\end{align*}
where $i, j \in [d]$, $s \in \mathbb{R}^{+}$ and $
    \Delta^j_i = \begin{cases}
        s & \text{if } i = j \\
        0 & \text{o.w.}
    \end{cases}$. \\

\noindent Our main result shows that we can recover the function $f$ in $O(dh \log \frac{h}{\delta})$ membership queries:
\begin{thm} \label{thm:mq}
With probability $1-\delta$, if $s \leq \frac{\delta \epsilon}{2(2-\delta)l \epsilon}$, where $l, \epsilon \in \mathbb{R}$ are parameters of the binary search in Algorithm \ref{algo:z-recovery}, then Algorithm \ref{algo:model-recovery}-MQ returns a function $\hat{f}$ such that $\hat{f} = f$ in $O(dh \log \frac{h}{\delta})$ membership queries.
\end{thm}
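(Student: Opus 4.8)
The plan is to show that, once $s$ is below the stated threshold, the finite-difference estimate $\widehat{\nabla} f$ returns the \emph{exact} gradient at every point queried by Algorithm~\ref{algo:model-recovery}-MQ with probability $1-\delta$, so that the run of Algorithm~\ref{algo:model-recovery}-MQ is indistinguishable from a run of Algorithm~\ref{algo:model-recovery} against a true gradient oracle; correctness and the query bound then reduce to Theorem~\ref{thm:prob-zrecovery} and the (gradient-free) analysis of \textit{recoverS}. The starting point is a deterministic observation: since $f$ is piecewise linear with pieces delimited by the hyperplanes $H_i=\{x:A_i^\top x=0\}$, the forward difference $\widehat{\nabla} f(x)_j=\frac{f(x+\Delta^j)-f(x)}{s}$ equals $\partial_j f(x)$ whenever the segment from $x$ to $x+\Delta^j$ crosses no $H_i$; a sufficient condition, simultaneously over all coordinates, is $\min_{i\in[h]}|A_i^\top x|>s$, because $|A_{ij}|\le\|A_i\|_2=1$ and the perturbation has $\ell_2$-length $s$. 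Let $\mathcal{E}$ be the event that every point $x$ at which Algorithm~\ref{algo:model-recovery}-MQ invokes $\widehat{\nabla} f$ satisfies $\min_i|A_i^\top x|>s$. On $\mathcal{E}$ every simulated gradient query returns the true gradient, so by Lemma~\ref{lem:det-zrecovery} and Theorem~\ref{thm:prob-zrecovery} (conditioned on the success event of \textit{recoverZ}) the recovered matrix $Z$ equals $\{\pm w_iA_i\}$ up to a permutation, and \textit{recoverS} — which queries only values of $f$ and picks its $2h$ points from $Z$ alone (Appendix~\ref{sec:pick-X}) — then returns the correct $s$, so $\hat f=f$.

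On $\mathcal{E}$ the query count is immediate: Algorithm~\ref{algo:z-recovery} makes $O(h\log\frac{l}{\epsilon})=O(h\log\frac{h}{\delta})$ gradient queries, each replaced by $d+1$ evaluations of $f$, and \textit{recoverS} adds only $O(h)$ membership queries, for a total of $O(dh\log\frac{h}{\delta})$. It remains to choose $s$ so that $P(\mathcal{E}^c)$ together with the failure probability of \textit{recoverZ} stays below $\delta$: splitting the budget as $\delta=\delta_1+\delta_2$, taking $\epsilon$ small enough that $P(\textit{recoverZ}\text{ fails})\le\delta_1$ via the union bound of Theorem~\ref{thm:prob-zrecovery}, and $s$ small enough that $P(\mathcal{E}^c)\le\delta_2$, is exactly what pins down the admissible ranges of $\epsilon$ and, in turn, the stated bound on $s$.

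The crux, and the step I expect to be the main obstacle, is bounding $P(\mathcal{E}^c)$, i.e.\ showing that none of the $O(h\log\frac{h}{\delta})$ query points lands within distance $s$ of a hyperplane. For a point of the form $x=u+tv$ we have $A_i^\top x=A_i^\top u+t\,A_i^\top v$, which for any \emph{fixed} $t$ is $\mathcal{N}(0,1+t^2)$, so $P(|A_i^\top x|\le s)\le s\sqrt{2/\pi}$; the trouble is that the parameters $t$ at the binary-search points are random, history-dependent functions of $(u,v)$, and, worse, the binary search is \emph{designed} to converge onto the roots $t_i=-A_i^\top u/A_i^\top v$, so the final bracketing points $x_l=u+t_lv$ and $x_r=u+t_rv$ of the $i$-th search lie within $\epsilon$ of $H_{k_i}$ by construction. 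One must therefore argue that although these $O(h)$ ``close'' query points lie within $\epsilon$ of a hyperplane, they are not pathologically close — closer than $s\ll\epsilon$ — to that or any other hyperplane. I would do this by an anti-concentration argument in the spirit of Lemma~\ref{lem:conc-bin}: the root $t_{k_i}$ has a continuous law (a Cauchy distribution, density at most $1/\pi$), so its position within the width-$\le\epsilon$ interval bracketing it is close to uniform, giving $P(|t_{k_i}-t_l|\le s/|A_{k_i}^\top v|)=O\!\big(s/(\epsilon\,|A_{k_i}^\top v|)\big)$; absorbing a further split on the Gaussian $|A_{k_i}^\top v|$ being too small, and a union bound over the $O(h)$ bracketing points and the $h$ hyperplanes, yields $P(\mathcal{E}^c)\le\delta_2$ for $s$ as small as the stated threshold. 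Making the ``close to uniform'' step rigorous — controlling the joint law of the bracketing interval and the root across the adaptive, compounding binary searches — is where the real work lies; combining it with the $\delta_1$ failure probability of \textit{recoverZ} completes the proof.
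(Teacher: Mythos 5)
Your reduction is the same as the paper's: a deterministic lemma saying the finite difference is exact whenever $\min_i|A_i^\top x|>s$ (the paper's Lemmas~\ref{lem:same-cell-same-grad} and~\ref{lem:small-s-same-grad}), an event $\mathcal{E}$ that this holds at every queried point, and on $\mathcal{E}$ a verbatim reuse of Theorem~\ref{thm:prob-zrecovery} plus the gradient-free analysis of \textit{recoverS}, with the query count $O(dh\log\frac{h}{\delta})$ following immediately. The problem is that you have not proved the one step that actually carries the probabilistic content, namely $P(\mathcal{E}^c)\le\delta_2$; you explicitly defer it to a conditional anti-concentration argument about the law of the root $t_{k_i}$ inside the adaptive bracketing interval, and you correctly observe that controlling that joint law across compounding, history-dependent binary searches is hard. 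That is a genuine gap, and it stems from a missing idea rather than from the problem being intrinsically delicate.

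The observation you are missing (Lemma~\ref{lem:exact-grads-in-algo} in the paper) is that adaptivity is irrelevant here: every point the binary search can ever touch lies on the \emph{fixed} grid $\mathcal{X}=\{u+i\epsilon v : |i|\le l/\epsilon,\ i\in\mathbb{Z}\}$, which is determined by $(u,v)$ alone and has only $O(l/\epsilon)$ elements. One therefore union-bounds over the entire grid rather than over the points actually queried. For a fixed grid point, $\langle A_k, u+i\epsilon v\rangle\sim\mathcal{N}(0,1+(i\epsilon)^2)$ has variance at least $1$, so $P(|\langle A_k,u+i\epsilon v\rangle|\le s)\le s$, and the union bound over $O(l/\epsilon)$ grid points and $h$ hyperplanes gives $P(\mathcal{E}^c)\le 2lhs/\epsilon$. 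This is wasteful by a factor of roughly $(l/\epsilon)/(h\log(l/\epsilon))$ compared to bounding only the queried points, but the waste is free: $s$ does not enter the query complexity, so it can be taken as small as $\frac{\delta\epsilon}{2(2-\delta)lh}$ (which, incidentally, is what the theorem's hypothesis should read; the $l\epsilon$ in the stated denominator is a typo for $lh$). Your worry that the bracketing points are within $\epsilon$ of a hyperplane ``by construction'' is true but harmless once the bound is over the whole grid: the event $\mathcal{E}$ is a function of $(u,v)$ only, and being within $\epsilon$ of a hyperplane in the parameter $t$ does not make a grid point within $s$ of it any more likely than the unconditional Gaussian bound allows. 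With this in hand, your $\delta=\delta_1+\delta_2$ split closes via a plain union bound over the two failure events (both measurable with respect to $(u,v)$), and the rest of your argument stands.
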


\noindent The next subsection contains our proofs.

\subsection{Proofs}

The proof of Theorem \ref{thm:mq} relies on showing that we can pick an $s$ small enough so that with high probability all estimates of the gradient are equal to the exact gradient. We show this by proving that if all points used in estimating a gradient lie in the same cell defined by the separating hyperplanes of $f$, then the estimate of the gradient $\widehat{\nabla} f(x)$ is equal to the gradient $\nabla f(x)$. If $s$ is small enough, then all points evaluated for a gradient estimate will lie in the same cell, and thus the exact gradient will be recovered. By choosing $s$ small enough, we can ensure that all gradients estimated by Algorithm \ref{algo:model-recovery}-MQ are equal to the exact gradient with high probability.


First, we show that if all points sampled in estimating the gradient lie in the same cell, then the estimate of the gradient $\widehat{\nabla} f(x)$ is equal to the gradient $\nabla f(x)$:
\begin{lem} \label{lem:same-cell-same-grad}
    Suppose for all $j \in [d]$, $x+\Delta^j$ lies in the same cell as $x$, i.e,
    \begin{align*}
        \mathbb{I}\{Ax \geq 0\} =  \mathbb{I}\{A(x+\Delta^j) \geq 0\} \,.
    \end{align*}
    Then, $\widehat{\nabla} f(x) = \nabla f(x)$.
\end{lem}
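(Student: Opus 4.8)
The plan is to exploit the fact that $f$ is an exact linear function on each cell defined by the hyperplanes $\{A_i^\top x = 0\}_{i=1}^h$. Concretely, recall $f(x) = \sum_{i=1}^h w_i g(x)_i A_i^\top x$ with $g(x) = \mathbb{I}\{Ax \geq 0\}$, so that on the closed cell $\mathcal{C} = \{z : g(z) = g(x)\}$ the function restricts to the linear map $z \mapsto \sum_{i=1}^h w_i g(x)_i A_i^\top z$, whose gradient is exactly $\nabla f(x) = \sum_{i=1}^h w_i g(x)_i A_i$. The hypothesis says that for every $j \in [d]$ the perturbed point $x + \Delta^j$ has the same sign pattern $\mathbb{I}\{A(x+\Delta^j) \geq 0\} = \mathbb{I}\{Ax \geq 0\} = g(x)$, so both $x$ and $x + \Delta^j$ lie in $\mathcal{C}$ and $f$ agrees with this single linear function at both points.

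From here the computation is immediate. First I would write, for each $j \in [d]$,
\begin{align*}
    f(x + \Delta^j) &= \sum_{i=1}^h w_i g(x)_i A_i^\top (x + \Delta^j) = f(x) + s \sum_{i=1}^h w_i g(x)_i (A_i)_j = f(x) + s\, (\nabla f(x))_j,
\end{align*}
using $\Delta^j = s e_j$ and $(\nabla f(x))_j = \sum_{i=1}^h w_i g(x)_i (A_i)_j$. Rearranging gives $\widehat{\nabla} f(x)_j = \frac{f(x+\Delta^j) - f(x)}{s} = (\nabla f(x))_j$ for every coordinate $j$, hence $\widehat{\nabla} f(x) = \nabla f(x)$.

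There is essentially no obstacle here: the only subtlety worth a sentence is making sure the ReLU indicator used in the hypothesis ($\mathbb{I}\{A z \geq 0\}$) genuinely pins down which linear piece $f$ equals at $z$, i.e. that $\max(A_i^\top z, 0) = g(z)_i A_i^\top z$ whenever $g(z)_i = \mathbb{I}\{A_i^\top z \geq 0\}$ — which holds by definition of $\mathrm{ReLU}$, including the boundary case $A_i^\top z = 0$ where both sides are $0$. With that observation in place the lemma follows from the displayed two-line calculation.
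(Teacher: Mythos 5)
Your proof is correct and follows essentially the same route as the paper's: both use the hypothesis that the sign pattern $\mathbb{I}\{A(x+\Delta^j)\geq 0\}$ equals $\mathbb{I}\{Ax\geq 0\}$ to replace the indicator at the perturbed point, so that $f$ evaluates via the same linear piece at $x$ and $x+\Delta^j$ and the finite difference recovers $\nabla f(x)_j$ exactly. The only cosmetic difference is that you write the computation in summation form while the paper uses the matrix form $w^\top\mathrm{Diag}(\mathbb{I}\{Ax\geq 0\})Ax$; your extra remark on the boundary case $A_i^\top z = 0$ is a harmless addition.
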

\begin{proof}
Recall that the function $f$ can be expressed as
\begin{align*}
    f(x) = w^\top \text{Diag}(\mathbb{I}\{Ax \geq 0 \})Ax \,.
\end{align*}

Thus, the $j$-th component of the gradient of $f$ is
\begin{align*}
    \nabla f(x)_j = w^\top \text{Diag}(\mathbb{I}\{Ax \geq 0 \})a_j \,,
\end{align*}

where $a_j$ is the $j$-th column of $A$. Our estimate of the gradient is
\begin{align*}
    \widehat{\nabla} f(x)_j & = \frac{f(x + \Delta^j) - f(x)}{s} \\
    & = \frac{w^\top \text{Diag}(\mathbb{I}\{A(x + \Delta^j) \geq 0 \})A(x + \Delta^j) - w^\top \text{Diag}(\mathbb{I}\{Ax \geq 0 \})Ax}{s} \\
    & = \frac{w^\top \text{Diag}(\mathbb{I}\{Ax \geq 0 \})A(x + \Delta^j) - w^\top \text{Diag}(\mathbb{I}\{Ax \geq 0 \})Ax}{s} \\
    & = \frac{w^\top \text{Diag}(\mathbb{I}\{Ax \geq 0 \})A\Delta^j}{s} \\
    & = \frac{w^\top \text{Diag}(\mathbb{I}\{Ax \geq 0 \})a_js}{s} \\
    & = w^\top \text{Diag}(\mathbb{I}\{Ax \geq 0 \})a_j \\
    & = \nabla f(x)_j \, .
\end{align*}

Therefore, $\widehat{\nabla} f(x) = \nabla f(x)$.
\end{proof}

The next lemma shows that if $s$ is small enough, then all points evaluated used to estimate a gradient lie in the same cell, and thus the exact gradient is recovered.

\begin{lem}
\label{lem:small-s-same-grad}
Suppose $s \in \mathbb{R}^{+}$ is such that $|Ax| \geq s\mathbf{1}$, then $\widehat{\nabla} f(x) = \nabla f(x)$.
\end{lem}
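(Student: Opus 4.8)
The plan is to show that the hypothesis $|Ax| \geq s\mathbf{1}$ (meaning $|A_k^\top x| \geq s$ for every row index $k \in [h]$) forces each perturbed point $x + \Delta^j$ to lie in the same activation cell as $x$, and then invoke Lemma \ref{lem:same-cell-same-grad} to conclude $\widehat{\nabla} f(x) = \nabla f(x)$. So the only work is verifying the cell-membership condition $\mathbb{I}\{Ax \geq 0\} = \mathbb{I}\{A(x + \Delta^j) \geq 0\}$ for all $j \in [d]$.

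First I would fix a row $A_k$ and a coordinate $j$, and compare $A_k^\top x$ with $A_k^\top(x + \Delta^j) = A_k^\top x + s\, (A_k)_j$, using that $\Delta^j$ has a single nonzero entry equal to $s$ in position $j$. Since $A_k$ is a unit vector (Assumption \ref{as:unit-norm}), we have $|(A_k)_j| \leq 1$, hence $|A_k^\top \Delta^j| = s|(A_k)_j| \leq s$. Combined with the hypothesis $|A_k^\top x| \geq s$, the perturbation cannot flip the sign: if $A_k^\top x \geq s > 0$ then $A_k^\top(x+\Delta^j) \geq A_k^\top x - s \geq 0$, and if $A_k^\top x \leq -s < 0$ then $A_k^\top(x+\Delta^j) \leq A_k^\top x + s \leq 0$. (One should note that the hypothesis rules out $A_k^\top x = 0$, so the indicator is unambiguous, and I would be slightly careful at the boundary value $0$, which is why phrasing the indicator as $\mathbb{I}\{\cdot \geq 0\}$ consistently matters.) Therefore $\mathbb{I}\{A_k^\top x \geq 0\} = \mathbb{I}\{A_k^\top(x+\Delta^j) \geq 0\}$; since $k$ and $j$ were arbitrary, the full indicator vectors agree, i.e.\ $x + \Delta^j$ lies in the same cell as $x$ for every $j \in [d]$.

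With the cell condition established, Lemma \ref{lem:same-cell-same-grad} applies directly and gives $\widehat{\nabla} f(x) = \nabla f(x)$, completing the proof. There is essentially no main obstacle here — the lemma is a short sign-stability argument — but if I had to name the one place to be careful, it is the use of the unit-norm assumption to bound $|(A_k)_j| \leq 1$ (which makes $s$ the correct scale for the margin hypothesis), together with consistent handling of the weak inequality at zero so that the indicators are well defined and genuinely equal.
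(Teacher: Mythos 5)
Your proof is correct and follows essentially the same route as the paper's: both reduce to showing the sign pattern of $Ax$ is unchanged by the perturbation $s a_j$ (bounded via the unit-norm rows) and then invoke Lemma \ref{lem:same-cell-same-grad}. Your row-by-row case analysis is in fact slightly more careful about the boundary value $0$ than the paper's one-line version.
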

\begin{proof}
We simply need to prove that $\mathbb{I} \{A(x + \Delta^{j}) \geq 0 \} = \mathbb{I} \{Ax \geq 0 \}$ for all $j \in [d]$ and then the result follows by Lemma \ref{lem:same-cell-same-grad}. Since the rows of the weight matrix $A$ are unit norm, we know $|sa_j| \leq |s\mathbf{1}| \leq |Ax|$ where $a_j$ is the $j$-th column of $A$. Thus, $\mathbb{I} \{A(x + \Delta^{j}) \geq 0 \} = \mathbb{I} \{Ax + sa_j \geq 0\} = \mathbb{I}\{Ax \geq 0 \}$. The result then follows from Lemma \ref{lem:same-cell-same-grad}.
\end{proof}

Next, given a particular value of $s$, we bound the probability that all gradients we estimate with our algorithm are exactly equal to the true gradient.
 
\begin{lem} 
\label{lem:exact-grads-in-algo} Let $\mathcal{X} = \{u + i \epsilon v \mid |i| \leq l/\epsilon, i \in \mathbb{Z} \}$ be the set of points Algorithm \ref{algo:z-recovery} may query. Then,
\begin{align*}
    P(\forall x \in \mathcal{X} ~~\widehat{\nabla} f(x) = \nabla f(x)) \geq 1 - \frac{2lhs}{\epsilon} \,.
\end{align*}
\end{lem}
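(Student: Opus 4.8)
The plan is to reduce the event in question to the statement that no query point lands too close to any of the separating hyperplanes, and then to control that event via a one-line Gaussian density bound combined with a union bound.

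First I would invoke Lemma~\ref{lem:small-s-same-grad}: whenever a point $x$ satisfies $|Ax| \ge s\mathbf{1}$ coordinatewise, i.e.\ $|\langle A_i, x\rangle| \ge s$ for every $i \in [h]$, the finite-difference estimate agrees exactly with the true gradient, $\widehat{\nabla} f(x) = \nabla f(x)$. Crucially, that lemma already absorbs the perturbed evaluation points $x + \Delta^j$, so it suffices to control the base points $x \in \mathcal{X}$ themselves. Hence the complement of the desired event is contained in $\bigcup_{x \in \mathcal{X}} \bigcup_{i=1}^{h} \{\, |\langle A_i, x\rangle| < s \,\}$, and it is enough to upper bound the probability of this union.

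Next I would fix $x = u + j\epsilon v \in \mathcal{X}$ and $i \in [h]$ and compute the marginal law of $\langle A_i, x\rangle$. Since $u, v \sim \mathcal{N}(0, I_d)$ are independent, $x = u + j\epsilon v \sim \mathcal{N}(0, (1 + j^2\epsilon^2) I_d)$, and because $A_i$ is a unit vector (Assumption~\ref{as:unit-norm}) we get $\langle A_i, x\rangle \sim \mathcal{N}(0, 1 + j^2\epsilon^2)$, a one-dimensional Gaussian of variance at least $1$. Its density is therefore bounded by $1/\sqrt{2\pi}$, so $P(|\langle A_i, x\rangle| < s) \le 2s/\sqrt{2\pi} < s$. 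A union bound over the $h$ rows and the $|\mathcal{X}| \le 2l/\epsilon$ grid points then gives $P(\exists\, x \in \mathcal{X} : \widehat{\nabla} f(x) \ne \nabla f(x)) \le 2lhs/\epsilon$, which is the claim.

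I do not expect a serious obstacle; the argument is short. The only places needing a little care are (i) being explicit that Lemma~\ref{lem:small-s-same-grad} requires the inequality $|Ax| \ge s\mathbf{1}$ only at the grid points, not at the perturbed points, so the union runs over $\mathcal{X}$ and nothing larger; and (ii) the bookkeeping of $|\mathcal{X}|$, where the ``$+1$'' coming from the endpoints is absorbed into the stated bound. If one prefers to avoid the distributional identity for $x$, an equivalent route is to condition on $v$: conditionally, $\langle A_i, x\rangle = \langle A_i, u\rangle + j\epsilon \langle A_i, v\rangle \sim \mathcal{N}(j\epsilon\langle A_i, v\rangle, 1)$, again with density at most $1/\sqrt{2\pi}$, and the same bound follows after integrating out $v$.
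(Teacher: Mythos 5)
Your proposal is correct and follows essentially the same route as the paper: reduce via Lemma~\ref{lem:small-s-same-grad} to the event that every grid point stays at distance at least $s$ from each hyperplane, observe that $\langle A_i, u + j\epsilon v\rangle \sim \mathcal{N}(0, 1 + j^2\epsilon^2)$ has variance at least one so that Gaussian anti-concentration gives $P(|\langle A_i, x\rangle| \le s) \le s$, and finish with a union bound over the $h$ rows and the roughly $2l/\epsilon$ grid points. No gaps; the minor bookkeeping points you flag (the endpoint count and the fact that the perturbations $x+\Delta^j$ are already handled inside Lemma~\ref{lem:small-s-same-grad}) are treated the same way in the paper.
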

\begin{proof}
First we will establish a bound for one row $a$ of the weight matrix $A$.
\begin{align*}
& P(\exists x \in \mathcal{X} : |\langle a, x \rangle| \leq s) \\
& \leq \sum_{x \in \mathcal{X}} P(|\langle a, x \rangle| \leq s) & \text{Union bound} \\
& = \sum_{|i| \leq l/\epsilon, i \in \mathbb{Z}} P(|\langle a, u + i \epsilon v \rangle| \leq s) \\
& = \sum_{|i| \leq l/\epsilon, i \in \mathbb{Z}} P(|Z_i| \leq s) & Z_i \sim \mathcal{N}(0, 1 + (\epsilon i)^2) \\
& \leq \frac{2l}{\epsilon} P(|Z| \leq s) & Z \sim \mathcal{N}(0, 1) \\
& \leq \frac{2ls}{\epsilon} & \text{Gaussian anti-concentration}
\end{align*}

A union bound on all rows of the weight matrix $A$ then shows that
\begin{align*}
    P(\exists x \in \mathcal{X}, i \in [h] : |\langle A_i, x \rangle | \leq s)
    \leq \sum_{i=1}^{h} P(\exists x \in \mathcal{X}: |\langle A_i, x \rangle | \leq s)
    \leq \frac{2lhs}{\epsilon} \,.
\end{align*}

Thus, by Lemma \ref{lem:small-s-same-grad},
\begin{align*}
    P(\forall x \in \mathcal{X} ~~\widehat{\nabla} f(x) = \nabla f(x)) = P(\forall x \in \mathcal{X} ~~|Ax| \geq s \mathbf{1}) \geq 1 - \frac{2ls}{\epsilon} \,.
\end{align*}
\end{proof}

Finally, we show that by picking $s$ small enough so that all gradients estimate are exact with high probability, the sample complexity of Algorithm \ref{algo:model-recovery}-MQ becomes $O(dh \log \frac{h}{\delta})$ membership queries.
\begin{reptheorem}{thm:mq}
With probability $1-\delta$, if $s \leq \frac{\delta \epsilon}{2(2-\delta)l \epsilon}$, where $l, \epsilon \in \mathbb{R}$ are parameters of the binary search in Algorithm \ref{algo:z-recovery}, then Algorithm \ref{algo:model-recovery}-MQ returns a function $\hat{f}$ such that $\hat{f} = f$ in $O(dh \log \frac{h}{\delta})$ membership queries.
\end{reptheorem}
\begin{proof}
Algorithm \ref{algo:model-recovery} only uses gradients of $f$ in Algorithm \ref{algo:z-recovery} and Algorithm \ref{algo:model-recovery} succeeds if and only if Algorithm \ref{algo:z-recovery} succeeds. Thus, we can bound the success of Algorithm \ref{algo:model-recovery}-MQ by bounding the probability that all gradients used in Algorithm \ref{algo:z-recovery} are estimated exactly.

In $O(h \log \frac{2h}{\delta}) = O(h \log \frac{h}{\delta})$ gradient queries we can guarantee that Algorithm \ref{algo:z-recovery} succeeds with probability $1 - \frac{\delta}{2}$. The probability Algorithm \ref{algo:model-recovery}-MQ succeeds then becomes the following.
\begin{align*}
    & P(\text{Algorithm \ref{algo:model-recovery}-MQ succeeds}) \\
    & = P(\text{Algorithm \ref{algo:z-recovery} succeeds} | \text{Exact gradients}) P(\text{Exact gradients}) \\
    & \geq \left ( 1 - \frac{\delta}{2} \right) \left ( 1 - \frac{2lh}{\epsilon}{s} \right) & (\text{Lemma \ref{lem:exact-grads-in-algo}}) \\ 
    & = \left ( 1 - \frac{\delta}{2} \right) \left ( 1 - \frac{2lh}{\epsilon} \left ( \frac{\delta \epsilon}{2(2-\delta)lh} \right ) \right) \\
    & \geq 1 - \delta
\end{align*}

Since, it takes $d$ membership queries to compute each gradient that Algorithm \ref{algo:z-recovery} requires, the sample complexity becomes $O(dh \log \frac{h}{\delta})$ membership queries.

\end{proof}

\section{SmoothGrad} \label{app:smoothgrad}

Instead of returning the raw gradient $\nabla f(x)$, SmoothGrad \citep{smilkov2017smoothgrad} returns an average of gradients around the input $x$:
\begin{align*}
\widetilde{\nabla} f(x) = \sum_{i=1}^{N} \frac{1}{N} \nabla f(x + z_i) ,\, 
\end{align*}
where $z_i \sim \mathcal{N}(0, \sigma^2 I)$ and $N > 0$. SmoothGrad has two hyperparameters: (1) $\sigma$ the standard deviation of the Gaussian noise and (2) $N$ the number of samples to pick. 

As shown in Figure \ref{fig:mnist-saliency}, we found that the best value of $\sigma$ for MNIST was 1000 times $\sigma_D$, the standard deviation of the images in the dataset. On CIFAR10 using either the VGG-11 or ResNet-18 network, no value of $\sigma$ seems to produce a sharp map (Figures \ref{fig:cifar-vgg-saliency} and \ref{fig:cifar-resnet-saliency}). So for our CIFAR10 experiments, we set $\sigma$ equal to the standard deviation of the dataset $\sigma_D$. In the original SmoothGrad paper, \citeauthor{smilkov2017smoothgrad} find that the best value of $\sigma$ for MNIST is about 70\% the spread of the dataset, while on ImageNet it is only 10-20\%. So the difference between the value of $\sigma$ we use on MNIST and the value of $\sigma$ we use on CIFAR10 seems to qualitatively match the difference in the value of $\sigma$ \citeauthor{smilkov2017smoothgrad} use on MNIST and ImageNet.

We expect that SmoothGrad may eventually degrade the performance of the reconstructed model as $\sigma$ increases. But at least for the values of $\sigma$ we test, which are already quite large relative to the standard deviation of the dataset, and seem to match values that may be used in practice, we see no degradation in performance when using gradients preprocessed by SmoothGrad.

Regarding the number of samples, $N$, \citeauthor{smilkov2017smoothgrad} state that the estimated gradient becomes smoother as $N$ increases, but that they find diminishing returns for $N > 50$. For computational reasons we set $N=10$ in our experiments, however, this should only make it \textit{harder} to learn, since the outputs of SmoothGrad become noisier.

\begin{figure}
    \centering
    \includegraphics[scale=0.7]{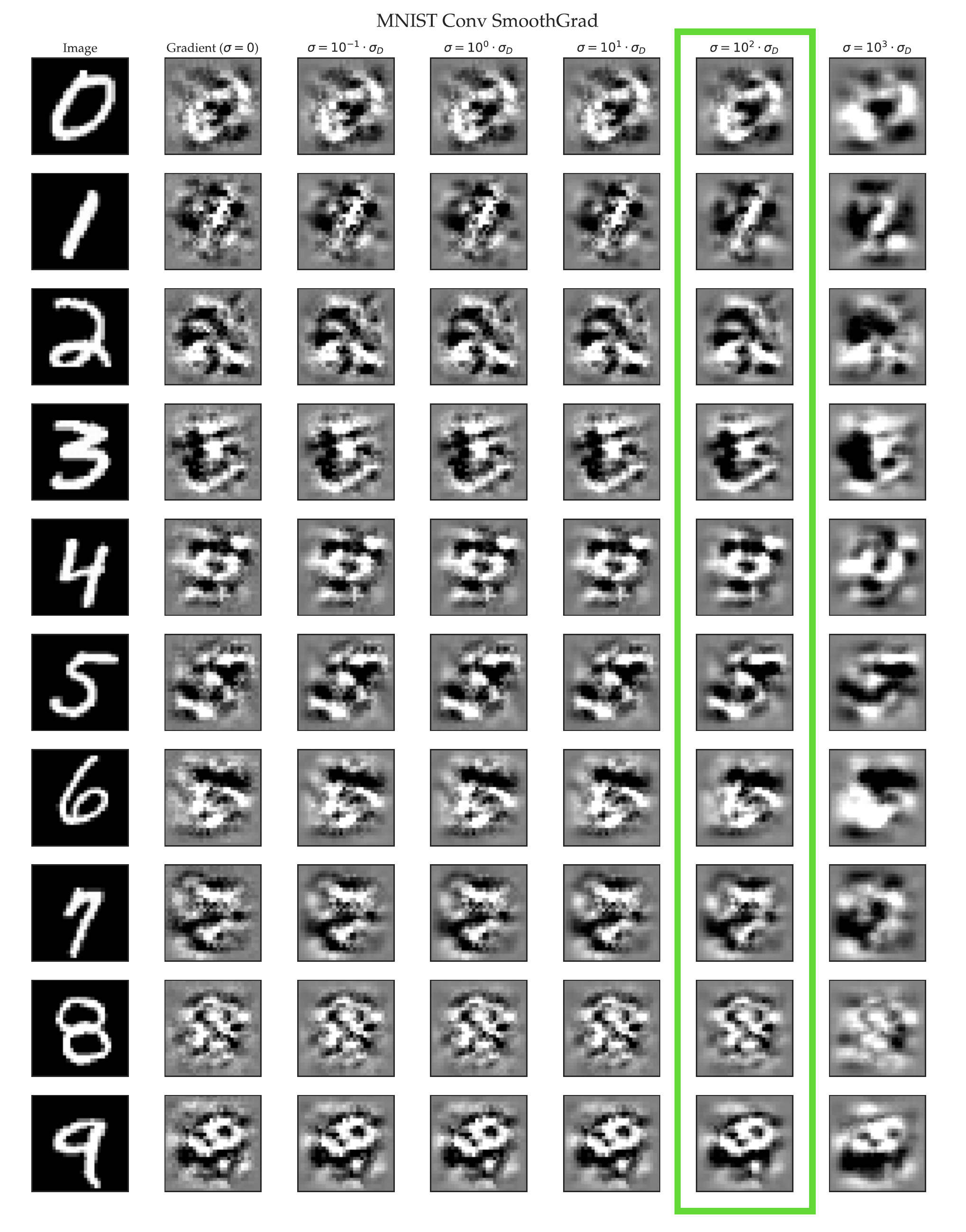}
    \caption{Saliency maps computed with SmoothGrad on the MNIST convolutional network described in Section \ref{sec:exps} using $N=100$. For our experiments in Section \ref{sec:exps}, we choose $\sigma$ to be 1000 times the standard deviation $\sigma_D$ of the dataset (highlighted column).}
    \label{fig:mnist-saliency}
\end{figure}

\begin{figure}
    \centering
    \includegraphics[scale=0.7]{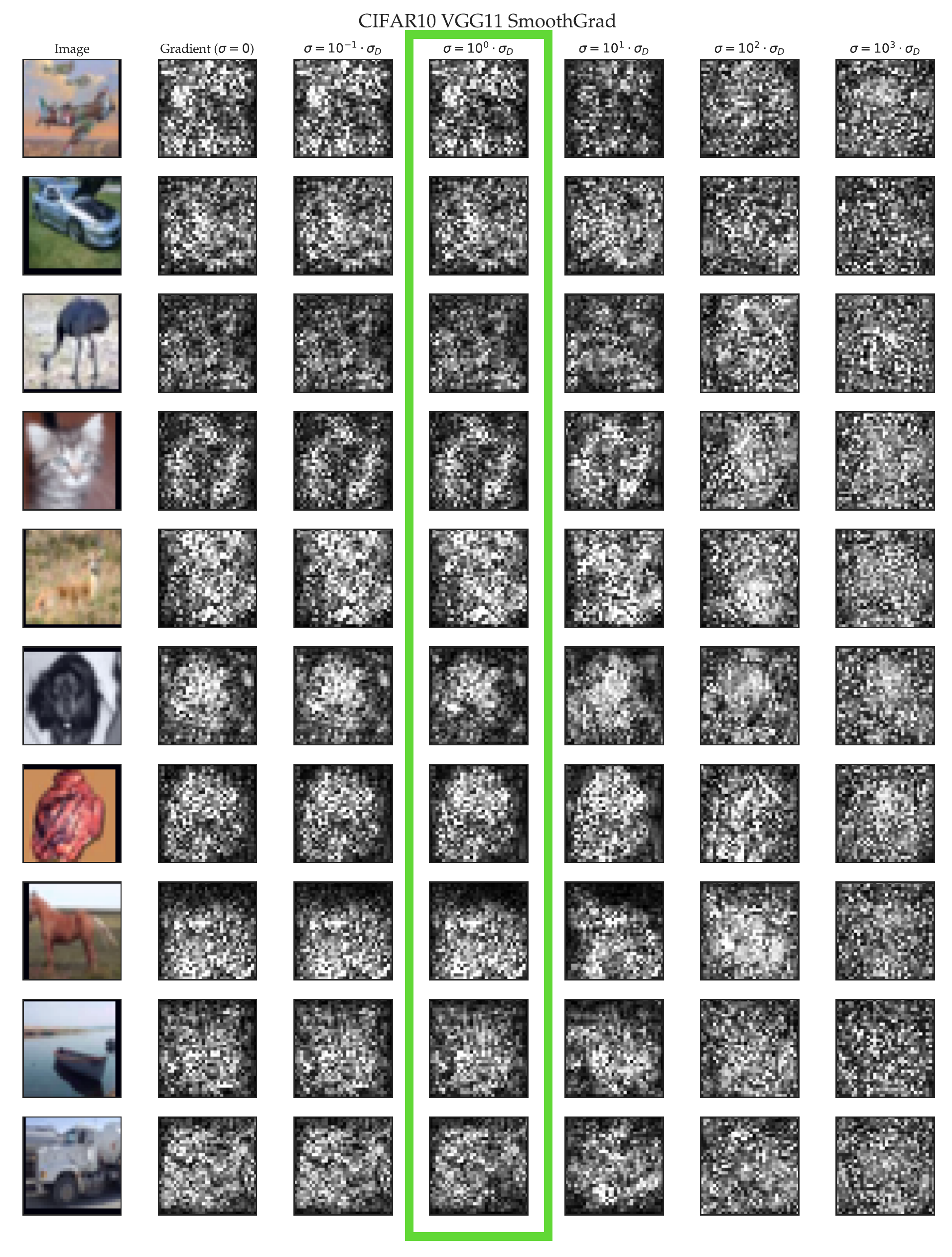}
    \caption{Saliency maps computed with SmoothGrad on the CIFAR10 VGG-11 network using $N=100$. Following \cite{smilkov2017smoothgrad}, for CIFAR10, which has RGB images, we visualize the absolute value of the output of SmoothGrad. For our experiments in Section \ref{sec:exps}, we choose $\sigma$ to be equal to the standard deviation $\sigma_D$ of the dataset (highlighted column).}
    \label{fig:cifar-vgg-saliency}
\end{figure}

\begin{figure}
    \centering
    \includegraphics[scale=0.7]{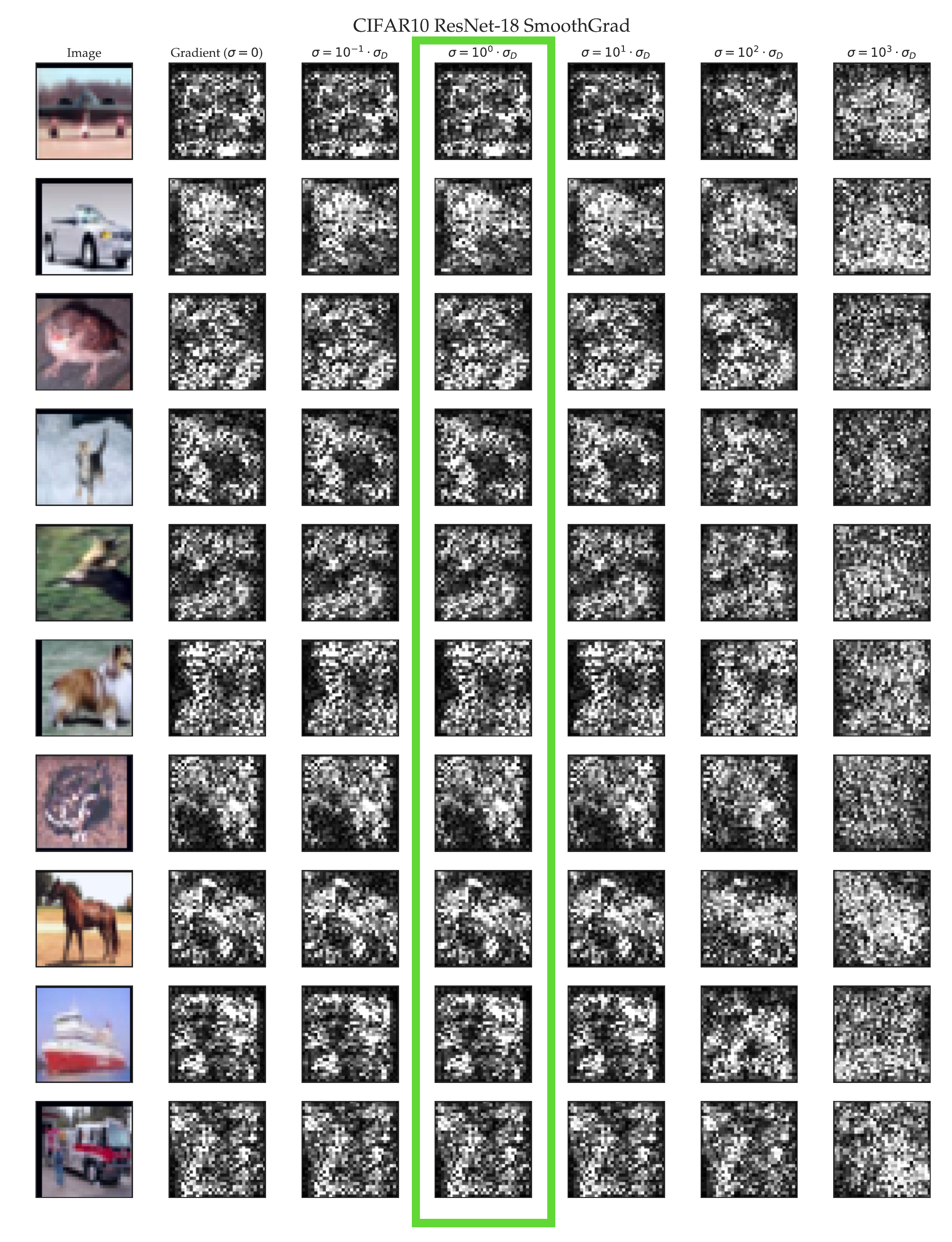}
    \caption{Saliency maps computed with SmoothGrad on the CIFAR10 ResNet-18 network using $N=100$. Following \cite{smilkov2017smoothgrad}, for CIFAR10, which has RGB images, we visualize the absolute value of the output of SmoothGrad. For our experiments in Section \ref{sec:exps}, we choose $\sigma$ to be equal to the standard deviation $\sigma_D$ of the dataset (highlighted column).}
    \label{fig:cifar-resnet-saliency}
\end{figure}
\end{appendices}

\end{document}